\DeclareMathOperator*{\argmin}{arg\,min} 
\newcommand{\model}{DC-GNN}
\newcommand{\msgPassing}{\texttt{DC-MsgPassing}}
\renewcommand{\emph}[1]{\textit{#1}}
\newcommand{\lambdaclusterLoss}{$\mathcal{L}_{\rm cluster}^\lambda$}
\newcommand{\clusterLoss}{$\mathcal{L}_{\rm cluster}$}
\newcommand{\numnodes}{|\mathcal{V}|}
\newcommand{\gray}[1]{\textcolor{gray}{#1}}
\theoremstyle{plain}
\newtheorem{theorem}{Theorem}[section]
\newtheorem{lemma}[theorem]{Lemma}
\theoremstyle{definition}
\theoremstyle{remark}
\newtheorem{remark}[theorem]{Remark}
\title{Differentiable Cluster Graph Neural Network}
\author{%
Yanfei Dong\(^{1,2}\) \quad Mohammed Haroon Dupty\(^2\)  \quad Lambert Deng \\\textbf{Zhuanghua Liu}\(^{2}\) \quad \textbf{Yong Liang Goh}\(^2\) \quad \textbf{Wee Sun Lee}\(^2\)\\
  \(^1\)PayPal \quad \(^2\)National University of Singapore\\
  \texttt{\{yanfei.dong43, lambertdeng, liuzhuanghua9\}@gmail.com} \\
  \texttt{\{haroon, gyl, leews\}@comp.nus.edu.sg}
}
\begin{document}

\maketitle

\begin{abstract}

Graph Neural Networks often struggle with long-range information propagation and in the presence of heterophilous neighborhoods.
We address both challenges with a unified framework that incorporates a clustering inductive bias into the message passing mechanism, using additional cluster-nodes.
Central to our approach is the formulation of an optimal transport based implicit clustering objective function. However, the algorithm for solving the implicit objective function needs to be differentiable to enable end-to-end learning of the GNN. To facilitate this, we adopt an entropy regularized objective function and propose an iterative optimization process, alternating between solving for the cluster assignments and updating the node/cluster-node embeddings. 
Notably, our derived closed-form optimization steps are themselves simple yet elegant message passing steps operating seamlessly on a bipartite graph of nodes and cluster-nodes.
Our clustering-based approach can effectively capture both local and global information,  
demonstrated by extensive experiments on both heterophilous and homophilous datasets. 
\end{abstract}
\section{Introduction}

Graph Neural Networks (GNNs) have emerged as prominent models for learning node representations on graph-structured data. Their architectures predominantly adhere to the message passing paradigm, where node embeddings are iteratively refined using features from its adjacent neighbors~\citep{kipf2016semi,defferrard2016convolutional,gilmer2017neural}. 
While this message passing paradigm has proven effective in numerous applications~\citep{ying2018graph,zhou2020graph}, two prominent challenges have been observed.
First, long-range information propagation over a sparse graph can be challenging~\cite{li2018deeper,zhou2021understanding, rusch2023survey}.
Expanding the network's reach by increasing the number of layers is often suboptimal as it could encounter issues such as 
over-squashing~\cite{alon2020bottleneck,topping2021understanding}, where valuable long-range information gets diluted as it passes through the graph's bottlenecks, diminishing its impact on the target nodes. 
Second, some graphs exhibit heterophily, where connected nodes are likely to be dissimilar. In such cases, aggregating information from the dissimilar neighbors could potentially be harmful and hinder the graph representation learning performance~\citep{zhu2020beyond,zhu2021graph}.

In this paper, we focus on the task of supervised node classification and explore clustering as an inductive bias to address both challenges. We observe that cluster patterns appear globally when nodes that are far apart in the graph have similar features (see Fig.~\ref{fig:intro_fig}). 
By clustering nodes into multiple global clusters based on their features, we can establish connections between these distant nodes. 
Additionally, clusters may also be present locally among a node and its neighbours in the graph. Particularly for heterophilic graphs, there are likely more than one cluster in the local neighbourhood. 
Leveraging these local clusters for information aggregation, rather than across the entire neighborhood at once, can potentially help to avoid the problems caused by the heterophilic nature of the graph.

To embed the clustering inductive bias, we propose the use of a differentiable clustering-based objective function consisting of a weighted sum of global and local clustering terms. The global objective encourages clustering on all the nodes in the graph, while the local objective does so on subsets of nodes, where each subset consists of a node in the graph and its immediate neighbors. Optimizing the objective function would produce a set of node embeddings that can then be used with a readout function for classifying the nodes in the graph. 
\begin{figure}[tbp]
    \centering
    \includegraphics[width=0.65\textwidth]{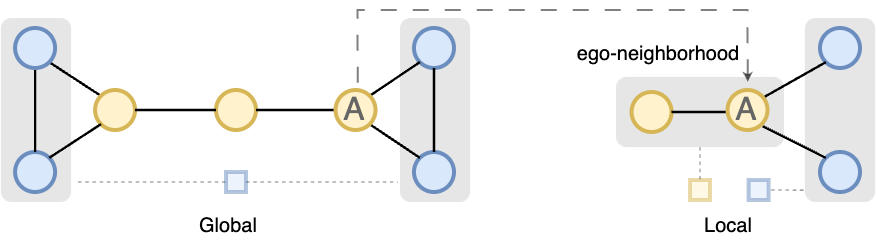} 
    \caption{\small On the left is an instance where distant nodes are similar to each other. On the right is the heterophilous ego-neighborhood of node A where cluster patterns appear.  
    Square boxes indicate conceptual cluster centroids.
    }
    \label{fig:intro_fig} 
\vspace{-15pt}
\end{figure}

To optimize the implicit clustering-based objective function, 
 we employ a block coordinate descent approach by iteratively, (1) solving for a soft cluster assignment matrix that probabilistically assigns a node to a cluster, followed by, (2) updating the node and cluster embeddings given the cluster assignment matrix. We view the first step of solving for the assignment matrix as an Optimal Transport (OT) problem~\cite{villani2009optimal}, which aims to find the most cost-effective way to move from one probability distribution to another. In our context, the cost is defined as the distance between nodes and cluster-centroids, and the goal is to transport nodes to cluster centroids with minimum cost. We adopt the entropic regularized Sinkhorn distance~\cite{cuturi2013sinkhorn} approximation for solving the 
OT problem. This approximation utilizes differentiable operations, ensuring end-to-end learning.

For the second step of updating the node and cluster embeddings, we derive a closed-form solution that minimizes the objective function given the assignment matrix. 
The derived block coordinate descent algorithm can be viewed as a message passing algorithm 
on a bipartite graph with an additional set of cluster-nodes. The cluster-nodes represent cluster centroids both globally of the entire graph and locally of each neighborhood. Importantly, our message-passing steps act as optimization steps that enforce the clustering inductive bias.

We name our method Differentiable Cluster Graph Neural Network (\model) and show its convergence. In \model, both local and global cluster-nodes serve important but different roles. The local cluster-node connections help to preserve the original graph structure information and enable clustered aggregation. 
Therefore, a single update via local cluster-nodes is analogous to one layer of message passing in conventional GNNs , with the cluster structure assisting in handling heterophilic local neighbourhoods.
Simultaneously, each update via global cluster-nodes allows transfer of long-range information from relevant distant nodes. Additionally, both types of cluster-nodes can be viewed as providing new pathways among the original nodes. This helps to lower the graph's total effective resistance, which is useful in mitigating oversquashing~\citep{black2023understanding}.

DC-GNN is efficient and has a linear complexity with respect to the graph size. We carry out comprehensive experiments on a variety of datasets, including both heterophilous and homophilous types. The results underscore the efficacy of our method, with our approach attaining state-of-the-art performance in multiple instances.

\section{Related Work}
Prominent GNN models typically follow a message passing paradigm that iteratively aggregates information in a node's neighborhood
~\citep{kipf2016semi, bruna2013spectral,defferrard2016convolutional,gilmer2017neural,velivckovic2017graph,xu2018powerful}. 
This local message passing, however, requires the stacking of multiple layers to pursue long-range information and can encounter issues such as over-smoothing~\cite{li2018deeper,cai2020note,rusch2023survey} and over-squashing~\citep{alon2020bottleneck,topping2021understanding,banerjee2022oversquashing, karhadkar2022fosr}.
To tackle oversquashing, existing works design graph rewiring techniques that change graph topology~\citep{topping2021understanding,nguyen2023revisiting,arnaiz2022diffwire,karhadkar2022fosr}. 
Similar to some existing methods~\cite{black2023understanding}, our approach of adding cluster-nodes also changes graph topology and is shown to reduce effective resistance, thereby helpful in mitigating oversquashing~\cite{black2023understanding}. 

Additionally, some graphs contain heterophilous neighborhoods, in which traditional aggregation promoting similarity among neighbors is suboptimal.~\citep{zhu2020beyond,zhu2021graph}.
There are three types of approaches to address this, including design of high-pass filters in message passing~\citep{chien2020adaptive,FuZB22,DongDJJL21}, exploring global neighborhoods~\citep{xu2022hp,JinYHWWHH21,li2022finding,abu2019mixhop},  and use of auxiliary graph structures~\citep{pei2020geom,zhu2020graph,lim2021large,yan2021two}.
Many of these approaches are often computationally expensive and may struggle on homophilous graphs. Our method seeks to explore global neighborhoods by introducing cluster-nodes as auxiliary structures and conduct clustered aggregation in local neighborhoods, with linear complexity.

Our work focuses on using a clustering inductive bias to enhance the supervised node classification task. 
This differs from tasks like graph clustering~\citep{tsitsulin2023graph,tian2014learning} and graph pooling~\citep{bianchi2020spectral,duval2022higher}, which are constrained by graph typology and aim to partition a graph into substructures.
Instead, our approach primarily utilizes feature information for global clustering. Unlike previous methods, we also explore clustering within local neighborhoods, which has not been explored before. Additionally, we integrate the clustering bias into the message passing mechanism in an end-to-end differentiable framework.

Clustering with a differentiable pipeline has been explored in some  works including in unsupervised and self-supervised settings~\cite{feng2022powerful,saha2023end,caron2018deep,stewart2024differentiable}. 
However, these works do not explore application on graph structured data. 
Central to our approach is an Optimal Transport based clustering objective.
OT has been recently applied to graph learning for tasks like 
graph classification~\citep{becigneul2020optimal,vincent2022template}, regularizing node representations~\citep{yang2020toward} and finetuning~\citep{li2020representation}.  
However, most of these methods leverage OT as a separate component and do not integrate it within message passing. Distinct from these approaches, we implicitly optimize an OT-based clustering objective function via message passing.

Our approach of message passing with additional cluster-nodes can also be viewed as the construction of factor graph neural network with the cluster-nodes representing factors, which has been effective in learning representation on structured data
~\cite{zhang2020factor,JMLR:v24:21-0434,dupty2020neuralizing,satorras2021neural,yoon2019inference}. Our approach provides further evidence for the usefulness of factor graph based approaches.

\section{Methodology}

In this section, we introduce our unified clustering-based GNN message passing method to address both long-range interactions and heterophilous neighborhood aggregation. This involves transforming the input graph into a bipartite graph by introducing cluster-nodes, defining an optimal transport (OT) based clustering objective function, and optimizing it through our derived message passing steps within a differentiable coordinate descent framework. Notations are introduced where needed throughout the paper. A complete list of these notations is available in Appendix~\ref{app:notations}.

\subsection{DC-GNN formulation}
We begin by constructing a bipartite graph, denoted as \( \mathcal{G} = (\mathcal{V}, \mathcal{C}, \mathcal{E}) \). This bipartite graph is derived from the original graph \( G = (V, E) \) and comprises two distinct sets of nodes. The first set, \( \mathcal{V} \), is a direct copy of the nodes \( V \) from the original graph. The second set, \( \mathcal{C} \), consists of cluster-nodes divided into two categories: global clusters (\( \Omega \)) and local clusters (\( \Gamma \)). $\mathcal{E}$ and $E$ represent the set of edges in the bipartitie graph and original graph respectively.

In this bipartite graph, each global cluster-node from \( \Omega \) connects to all nodes in \( \mathcal{V} \), thereby facilitating long-range interactions across distant nodes. Meanwhile, each local cluster-node from \( \Gamma \) is associated with a specific node \( i \) in \( \mathcal{V} \), and connected to its ego-neighborhood. The ego-neighborhood of node \( i \), denoted as \( \mathcal{N}_i^+ \), includes \( i \) itself and its one-hop neighbors, formally defined as \( \mathcal{N}_i^+ = \mathcal{N}_i \cup \{i\} \), where \( \mathcal{N}_i = \{ j : (i, j) \in E\} \). Here, \( \Gamma_i \) represents the set of local clusters associated with node \( i \) in \( \mathcal{V} \). The total number of nodes in \( \mathcal{C} \) equals the nodes in \( \Omega \) plus those in all local clusters, i.e., \( |\mathcal{C}| = |\Omega| + \sum_{i \in \mathcal{V}} |\Gamma_i| \). An illustration of the bipartite graph construction is provided in Appendix~\ref{app:sec:bipartite}.
\begin{figure}[htbp]
    \centering
    \includegraphics[width=\textwidth]{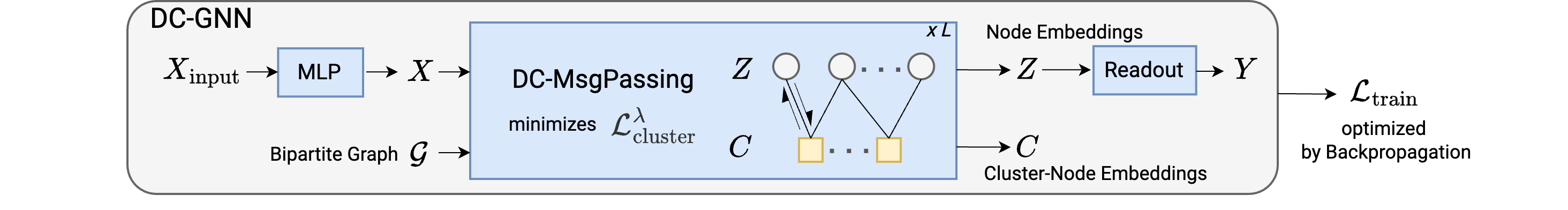} 
    \caption{\small Overview of \model. \msgPassing~is an iterative algorithm that minimizes \lambdaclusterLoss ~in each step of message passing, where \lambdaclusterLoss~is an optimal transport based clustering objective function. }
    \label{fig:architecture} 
\end{figure}

Following the bipartite graph construction, we propose our Differentiable Cluster Graph Neural Network (\model) to learn the node embeddings. An illustration of \model~architecture is presented in Fig.~{\ref{fig:architecture}}.
Given $X_{\rm input}$ as input node features, we first transform it with an MLP to produce $X$. The transformed features together with the bipartitie graph $\mathcal{G}$ are then put through an iterative \verb|DC-MsgPassing| algorithm to produce the embeddings of nodes $Z$ and cluster-nodes $C$. A learnable readout function such as a simple MLP, is then used on $Z$ to produce the class probabilities $Y$ in our node classification task. 
Then, \model~is trained end-to-end with a loss function $\mathcal{L}_{\rm train}$ via backpropagation. At the core of our design is the iterative \texttt{DC-MsgPassing} procedure, an implicit optimization algorithm that optimizes a clustering-based objective function \lambdaclusterLoss.

\subsection{Clustering-based objective function \clusterLoss}
In this work, we aim to address over-squashing and heterophily by embedding a clustering inductive bias into the design of the GNN.   
One way to achieve this is by optimizing a clustering-based objective function.  
Motivated by the theory of optimal transport, we propose a novel soft clustering-based objective function.

We conceptualize the cluster assignment problem as an optimal transport problem, where the cost is defined as the distance between the embeddings of nodes and their cluster centroids. Therefore, we propose to minimize the overall cost, weighted by the soft cluster assignment matrix $P$ which indicates the amount of assignment from a node to a cluster. Specifically, we have a single global soft cluster assignment matrix $P^{\Omega} \in \mathbb{R}^{|\mathcal{V}| \times |\Omega|}$  and local soft cluster assignment matrices $P^{\Gamma_i} \in \mathbb{R}^{|\mathcal{N}_i^+ | \times |\Gamma_i|}$ for each node $i$. 
Let $d(u,v)$ represent the distance between two vectors $u$ and $v$, $z_i$ be the node embeddings to be learnt for each node $i \in \mathcal{V}$, $x_i$ be the node features after an initial transformation by a multilayer perceptron, 
and $c_j^\Omega$, $c_j^{\Gamma_i}$ be the embeddings of the $j^{th}$ global and local cluster-node respectively. 
Then we define our objective function \clusterLoss~as

\vspace{-10pt}
\begin{equation}\label{eq:obj}
\begin{split}
\mathcal{L}_{\rm cluster} =\;&  \alpha \underbrace{\sum_{i \in \mathcal{V}} \sum_{j \in \Omega} P_{ij}^{\Omega} d(z_i,c_j^\Omega)}_{\text{global clustering }}  + (1 - \alpha) \sum_{i \in \mathcal{V}}  \underbrace{\sum_{u \in \mathcal{N}_i^+} \sum_{j \in \Gamma_i} P_{uj}^{\Gamma_i} d(z_u,c_j^{\Gamma_i})}_{\text{local clustering }} +\; \beta \underbrace{\sum_{i \in \mathcal{V}}  d( z_i , x_i )}_{\text{node fidelity } }.
\end{split}
\end{equation} 

The \emph{global clustering} part optimizes the OT distance between node embeddings and global cluster-node embeddings. The \emph{local clustering} term optimizes the OT distance between the embeddings of nodes and local cluster-nodes within each ego-neighborhood. The scalar parameter $\alpha \in [0,1]$ is a balancing factor between the two terms. Furthermore, the additional \emph{node fidelity} term encourages the node embeddings to retain some information from the original node features~\citep{klicpera2018predict,chen2020simple}. 

\subsection{\msgPassing~: Optimize \clusterLoss~with entropic regularization via message passing}

Since conventional OT solvers can be computationally prohibitive~\cite{pele2009fast}, we 
adopt the entropy regularized version of the OT distance that is designed for efficiency and offers a good approximation of OT distance~\cite{cuturi2013sinkhorn}, with details in Section~\ref{subsec:assignment_update}. 
let $h(P)$ be the entropy of the assignment matrix $P$, we propose the following refined objective function
\begin{equation}\label{eq:L_lambda_cluster}
    \mathcal{L}_{\rm cluster}^{\lambda} \coloneqq \mathcal{L}_{\rm cluster} -\frac{\alpha}{\lambda}h(P^{\Omega}) - \frac{(1-\alpha)}{\lambda}\sum_{i \in \mathcal{V}}h(P^{\Gamma_i}).
\end{equation}
Direct optimization of \lambdaclusterLoss~as a training objective is intractable due to the presence of unobserved assignment matrices $P^\Omega$ and $P^{\Gamma_i}$, since we cannot compute Eq.~(\ref{eq:L_lambda_cluster}) without estimating the cluster assignment values.
To overcome this, we propose an iterative block coordinate descent algorithm called \verb|DC-MsgPassing|. 
In each iteration of \verb|DC-MsgPassing|, we alternatively optimize \lambdaclusterLoss~with respect to one block of variables at a time, while all other variables are held constant. 
Specifically, we alternatively update the assignment matrices $P^\Omega$, $P^{\Gamma_i}$ and embeddings $Z, C$ in each iteration. 

\subsubsection{Assignment update}
\label{subsec:assignment_update}
\paragraph{Update for assignment matrices $ P^\Omega, P^{\Gamma_i}$:}
With node and cluster-node embeddings kept constant, we aim to update the $ P^\Omega, P^{\Gamma_i}$ minimizing \clusterLoss. We approach this cluster assignment problem from the perspective of optimal transport theory, and use $P^{\Omega}$ as an example without loss of generality. 

Our objective is to determine the optimal assignment matrix $P^{\Omega*} \in \mathbb{R}_+^{|\mathcal{V}| \times |\Omega|}$ for a given cost matrix $M \in \mathbb{R}_+^{|\mathcal{V}| \times |\Omega|}$ 
, aligning with a target clustering distribution. Assuming a uniform distribution of data points across all clusters, we aim to find a mapping from $\mathbf{u}^\top = \begin{bmatrix} |\mathcal{V}| ^{-1} & \cdots & |\mathcal{V}| ^{-1} \end{bmatrix}_{1 \times |\mathcal{V}|}$ to $\mathbf{v}^\top = \begin{bmatrix} |\Omega|^{-1} & \cdots & |\Omega|^{-1} \end{bmatrix}_{1 \times |\Omega|}$, minimizing the overall cost. To formalize, optimizing \clusterLoss~with respect to $P^\Omega$ is tantamount to solving:
\begin{equation}
\label{eq:ot_distance}
      \min_{P^\Omega \in U(\mathbf{u},\mathbf{v})} \langle P^\Omega, M \rangle,
\end{equation}
where \( U(\mathbf{u}, \mathbf{v}) = \{ P \in \mathbb{R}_+^{|\mathcal{V}| \times |\Omega|} : P\mathbf{1}_{|\Omega|} = \mathbf{u}, P^\top\mathbf{1}_{|\mathcal{V}|}  = \mathbf{v} \} \).  \( \langle \cdot, \cdot \rangle \) is the Frobenius dot-product, and \( \langle P^\Omega, M \rangle = \sum_{i \in \mathcal{V}} \sum_{j \in \Omega} P_{ij}^{\Omega} d(z_i,c_j^\Omega) \). Importantly, $M_{ij} = d(z_i, c^\Omega_j)$ can be seen as the cost of assigning node $i$ to cluster $j$, and $P^\Omega_{ij}$ indicates the amount of assignment from node $i$ to cluster $j$.

This is a classical optimal transport problem, where Eq.~(\ref{eq:ot_distance}) represents the optimal transport distance, also known as the earth mover's distance. 
While such problems are typically solved via linear programming techniques, these approaches are computationally expensive~\cite{pele2009fast}. 
To overcome it, we opt for the Sinkhorn distance~\cite{cuturi2013sinkhorn} instead, which offers a good approximation to the optimal transport distance with additional entropic regularization, weighted by scalar $1/\lambda$, where $\lambda > 0$. Formally,
\begin{equation}
\label{eq:sk_distance}
        \begin{gathered}
               \langle P_{\lambda}^\Omega, M \rangle,      
              \text{ where }   P_{\lambda}^\Omega = \argmin_{P^\Omega \in U(\mathbf{u},\mathbf{v})} \langle P^\Omega, M \rangle - \frac{1}{\lambda}h(P^\Omega).
        \end{gathered}
\end{equation}

The benefit of having this entropic regularization term $h(P^\Omega)$ is that the solution $P_{\lambda}^{\Omega*}$ now has the form $P_{\lambda}^{\Omega*} = UBV$~\citep{cuturi2013sinkhorn},
where $B = e^{-\lambda M}$, and $U$ and $V$ are diagonal matrices. 
Now the OT problem reduces to the classical matrix scaling problem, for which the objective is to determine if there exist diagonal matrices $U$ and $V$ such that the $i^{th}$ row of the matrix $UBV$ sums to $\mathbf{u}_i$ and the $j^{th}$ column of $UBV$ sums to $\mathbf{v}_j$.
Since $e^{-\lambda M}$ is strictly positive, there exists a unique $P_{\lambda}^{\Omega*}$ that belongs to $U(\mathbf{u},\mathbf{v})$~\cite{menon1968matrix, sinkhorn1967diagonal}, which can be obtained by the well-known Sinkhorn--Knopp algorithm~\cite{sinkhorn1967diagonal, sinkhorn1967concerning}.

To obtain $P_{\lambda}^{\Omega*}$, we run the Sinkhorn--Knopp algorithm which iteratively updates the matrix $B$ by scaling each row of $B$ by the respective row-sum, and each column of $B$ by the respective column-sum. Formally, we have
\begin{equation}
B_{ij}^{(t)} = \frac{B^{(t-1)}_{ij}}{\sum_j B^{(t-1)}_{ij}}\mathbf{u}_i, \quad B_{ij}^{(t+1)} = \frac{B^{(t)}_{ij}}{\sum_i B^{(t)}_{ij}}\mathbf{v}_j .
\label{eq:assignment_matrix_update}
\end{equation}
After $T$ steps, we update $P^{\Omega}_{ij}$ by the value of $B_{ij}$. Similarly, we update $P^{\Gamma_i}$ in the \emph{local clustering} term. The scaling operations in Sinkhorn--Knopp are \emph{fully differentiable}, enabling end-to-end learning.

\subsubsection{Embeddings update}
With the updated assignment matrices $P^{\Omega}, P^{\Gamma_i}$, we now derive the message passing equations to update the cluster-node and node embeddings.

\paragraph{Update for cluster-node embeddings $C$:}
To minimize \lambdaclusterLoss with respect to a specific cluster node $c_j$, we differentiate \lambdaclusterLoss~ in terms of $c_j$ with other variables fixed and set the derivative to zero. 
If we choose the distance function $d(\cdot)$ to be the squared Euclidean norm, \lambdaclusterLoss~is a quadratic model of node embeddings $c_j$. Then we can derive the following closed-form solution:
\begin{equation}
C = \text{diag}(\mathbf{k})P^\top Z,
\label{eq:cluster_update_msg}
\end{equation}
where $\mathbf{k^\top} = \begin{bmatrix} |\Omega| \mathbf{1}_{|\Omega|}; |\Gamma_i| \mathbf{1}_{|\Gamma_i|};\cdots; |\Gamma_{|\mathcal{V}|}| \mathbf{1}_{|\Gamma_i|}
  \end{bmatrix}_{1 \times |\mathcal{C}| }$, $;$ denotes concatenation. $P \in \mathbb{R}_+^{|\mathcal{V}| \times |\mathcal{C}|}$~
is the overall assignment matrix that indicates the amount of assignment from all nodes $Z$ to all cluster-nodes $C$. See Appendix~\ref{app:sec:broadcast} and~\ref{app:sec:derivation_c} for notation and full derivation.

\begin{remark} 
\vspace{0.7em}
Since $P_{ij}$ can be viewed as edge weight between a node $i \in \mathcal{V}$ and a cluster-node~$j \in \mathcal{C}$, 
the updating mechanism serves as the message passing function from nodes to cluster-nodes. 
\end{remark}

\paragraph{Update for node embeddings $Z$:} 
Similar to cluster-node embeddings update, we derive the node embeddings update by differentiating \lambdaclusterLoss~with respect to $z_i$. Consequently, we obtain

\vspace{-5pt}

\begin{equation}
Z = \gamma \Big[ \beta X + \alpha P^{\Omega}C^{\Omega}  + (1 - \alpha) \sum_{u \in \mathcal{N}_i^+}\hat{P}^{\Gamma_u}C^{\Gamma_u} \Big] ,
\label{eq:node_update_msg}
\end{equation}
where $\gamma = ({\alpha}{\numnodes}^{-1} + \beta + 1-\alpha)^{-1}$ is a constant, and $\hat{P}^{\Gamma_u} \in \mathbb{R}_+^{|\mathcal{V}| \times |\Gamma_u|}$ is the broadcasted local assignment matrix from $P^{\Gamma_u} \in \mathbb{R}_+^{|\mathcal{N}_u^+| \times |\Gamma_u|}$ (see Appendix~\ref{app:sec:broadcast} and~\ref{app:sec:derivation} for notation and full derivation).
Intriguingly, this closed-form solution reveals that $Z$ is updated by a linear combination of global and local cluster-node embeddings, weighted by cluster assignment probabilities. 
The hyperparameter $\alpha$ balances the influence of local and long-range interactions while $\beta$ scales the original node features that provide the initial residual~\cite{klicpera2018predict,chen2020simple}. 

\begin{remark} Viewing the cluster assignment probabilities as edge weights, Eq.~(\ref{eq:node_update_msg}) represents the message passing from cluster-nodes back to original nodes. Therefore, Eq.~(\ref{eq:cluster_update_msg}) and Eq.~(\ref{eq:node_update_msg}) function as message passing on the bipartite graph, enforcing the clustering inductive bias.
\end{remark}

\begin{wrapfigure}{R}{0.5\textwidth}
\vspace{-2em}
\begin{minipage}{0.5\textwidth}
\begin{algorithm}[H]
	\small
	\renewcommand{\algorithmicrequire}{\textbf{Input:}}
	\renewcommand{\algorithmicensure}{\textbf{Output:}}
    \caption{\small \msgPassing}\label{algo:FGNN}	
    \label{algo:algorithm}
 \begin{algorithmic}[1]
        \REQUIRE  Bipartite graph $\mathcal{G}=(\mathcal{V}, \mathcal{C}, \mathcal{E})$, Node features $X$, hyperparameters $\alpha$, $\beta$, $\lambda$\\
		\ENSURE $[z_i]_{i \in \mathcal{V}}$    
        \STATE $Z = X$
        \STATE Initialize cluster embeddings $C$
        \STATE // Optimize \lambdaclusterLoss~via \texttt{DC-MsgPassing}
        \FOR{ $l = 1, 2, \dots, L$ } 
    		\STATE \textbf{Update Cluster Assignment Matrices:}
            \STATE \quad $M_{ij} = d(z_i, c_j) \quad \forall i\in \mathcal{V}, j \in \mathcal{C}$ 
            \STATE \quad $B^{\{\Omega, \Gamma_i\}} = e^{-\lambda M}$ 
            \STATE \quad // Run Sinkhorn algorithm for $T$ steps (Eq.~\ref{eq:assignment_matrix_update})
            \STATE \quad $P^{\{\Omega, \Gamma_i\}} = B^{\{\Omega, \Gamma_i\}}$ 
    	   
            \STATE \textbf{Update Node and Cluster-node Embeddings:}
            \STATE \quad Calculate $C$ as per Eq.~(\ref{eq:cluster_update_msg})
            \STATE \quad Calculate $Z$ as per Eq.~(\ref{eq:node_update_msg})
        \ENDFOR
        
        \STATE \textbf{Return:} Z
	\end{algorithmic}
\end{algorithm}
\end{minipage}
\end{wrapfigure}

In summary, \verb|DC-MsgPassing| is the key to our method.
Each iteration of \texttt{DC-MsgPassing} consists of two alternative steps. First, with fixed embeddings $Z$ and $C$, optimal clustering assignment matrices are calculated via Sinkhorn-Knopp algorithm (Eq.~(\ref{eq:assignment_matrix_update})). Then, the cluster-node and node embeddings are refined through message passing with the updated assignment matrices via Eq.~(\ref{eq:cluster_update_msg}) 
and Eq.~(\ref{eq:node_update_msg}). In practice, we could also add learnable components such as linear transformation matrices or MLPs for the messages in Eq.~(\ref{eq:cluster_update_msg}) and Eq.~(\ref{eq:node_update_msg}) to allow the network to fit the data distribution better. \emph{Each iteration of} \msgPassing~\emph{is one optimization step towards minimizing} \lambdaclusterLoss. Thus our message passing mechanism provides the needed inductive bias in learning the local and global clusters present in the data, while simultaneously learning node embeddings using the clustering information. We present the details of \msgPassing~in Algorithm~\ref{algo:algorithm}.

\paragraph{Convergence analysis}The \texttt{DC-MsgPassing} algorithm is generally well behaved, converging when enough iterations are performed.
The following theorem presents the convergence analysis of minimizing the objective function $\mathcal{L}^{\lambda}_{cluster}$ with the \texttt{DC-MsgPassing} algorithm.

\begin{theorem}[Convergence of \texttt{DC-MsgPassing}]\label{thrm:converge} Assuming the Sinkhorn--Knopp algorithm is run to convergence in each iteration, for any $\lambda > 0$, the value of $\mathcal{L}^{\lambda}_{cluster}$ produced by \verb|DC-MsgPassing|~algorithm (Algorithm \ref{algo:algorithm}) is guaranteed to converge.
\end{theorem}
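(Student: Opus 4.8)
The plan is to prove this via the classical monotone-convergence argument for block coordinate descent. The key observation is that each iteration of \texttt{DC-MsgPassing} cyclically minimizes the single objective $\mathcal{L}^{\lambda}_{cluster}$ over three disjoint blocks of variables --- the assignment matrices $\{P^\Omega, P^{\Gamma_i}\}$, the cluster-node embeddings $C$, and the node embeddings $Z$ --- while holding the remaining blocks fixed. If I can argue that each of these three sub-updates returns an \emph{exact} minimizer of $\mathcal{L}^{\lambda}_{cluster}$ over its block, then the objective value is non-increasing across every sub-step, and a monotone non-increasing sequence that is bounded below must converge.

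First I would verify the exactness of each sub-update. For the assignment block, note that the only terms of $\mathcal{L}^{\lambda}_{cluster}$ depending on $P^\Omega$ (resp. each $P^{\Gamma_i}$) are precisely the corresponding OT cost together with its entropic penalty, so minimizing over $P^\Omega$ subject to the transport-polytope constraints $U(\mathbf{u},\mathbf{v})$ is exactly the entropic OT problem of Eq.~(\ref{eq:sk_distance}). By the matrix-scaling result cited after Eq.~(\ref{eq:sk_distance}), this problem has a unique minimizer to which Sinkhorn--Knopp converges; hence, under the theorem's assumption that Sinkhorn is run to convergence, this step attains the exact block minimum, and since the global and local terms are separable across $P^\Omega$ and the individual $P^{\Gamma_i}$ they can be minimized simultaneously. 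For the embedding blocks, when $d(\cdot)$ is the squared Euclidean distance and $P$ is held fixed, the entropy terms are constant and the remaining objective is a convex quadratic in $C$ (resp. in $Z$). The closed forms of Eq.~(\ref{eq:cluster_update_msg}) and Eq.~(\ref{eq:node_update_msg}) are obtained by setting the gradient to zero, and strict convexity --- guaranteed for $Z$ by the node-fidelity coefficient $\beta>0$ and for $C$ by the strictly positive entries of the Sinkhorn output $P$ --- ensures these stationary points are the unique global minima over their respective blocks. Consequently each sub-update yields $\mathcal{L}^{\lambda}_{cluster}(\text{new}) \le \mathcal{L}^{\lambda}_{cluster}(\text{old})$.

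Next I would establish that $\mathcal{L}^{\lambda}_{cluster}$ is bounded below. The clustering and node-fidelity terms are sums of products of nonnegative assignments $P_{ij}\ge 0$ and nonnegative squared distances, hence nonnegative. The only possibly negative contributions are the regularizers $-\tfrac{\alpha}{\lambda}h(P^\Omega)$ and $-\tfrac{1-\alpha}{\lambda}\sum_i h(P^{\Gamma_i})$; but the entropy of any matrix in a fixed transport polytope is bounded above by the logarithm of its number of entries, so these terms are bounded below by a finite constant. Summing, $\mathcal{L}^{\lambda}_{cluster}$ admits a finite lower bound independent of the iterate. Combining the two facts, the sequence of objective values produced by \texttt{DC-MsgPassing} is monotone non-increasing and bounded below, and therefore converges by the monotone convergence theorem, which is the claim.

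I expect the main obstacle to be the careful bookkeeping of the first step: one must confirm that each block update minimizes the \emph{full} regularized objective $\mathcal{L}^{\lambda}_{cluster}$ over its block rather than some surrogate --- in particular that the Sinkhorn output is the exact argmin of the $P$-dependent portion including the entropy penalty (not merely the linear OT cost), and that the entropy terms, being independent of $Z$ and $C$, leave the embedding subproblems as genuine convex quadratics whose first-order conditions give Eqs.~(\ref{eq:cluster_update_msg})--(\ref{eq:node_update_msg}). Note also that the theorem asks only for convergence of the objective \emph{value}; this avoids the substantially harder task of proving convergence of the iterates themselves, which would require additional regularity (e.g. a Kurdyka--\L{}ojasiewicz argument) beyond the monotone-descent reasoning above.
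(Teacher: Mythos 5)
Your proposal is correct and follows essentially the same route as the paper's proof: a lower bound on $\mathcal{L}^{\lambda}_{\rm cluster}$ obtained by upper-bounding the entropies by the logarithm of the number of matrix entries, combined with monotone descent from exact block minimization (Sinkhorn--Knopp for the assignments, closed-form quadratic minimizers for $C$ and $Z$), and the monotone convergence theorem. If anything, your write-up is slightly more careful than the paper's on two points it glosses over --- treating the $C$ and $Z$ updates as \emph{sequential} block steps (rather than a joint update) and noting that Sinkhorn minimizes the entropic-regularized block objective, not just the linear OT cost.
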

Proof can be found in the Appendix~\ref{app:thrm:converge}.

\paragraph{Complexity analysis} The complexity of the global assignment update step is $O(T|\mathcal{V}| |\Omega|)$ = $O(|\mathcal{V}|)$ as $|\Omega| \ll |\mathcal{V}|$ and $T$ are small constants. Both the complexity of the local assignment update step $O(T|\mathcal{E}| )$ = $O(|E|)$ and the embeddings update step $O(|\mathcal{E}| )$ = $O(|E|)$ are linear w.r.t. the number of edges. Thus the overall computational complexity is linear w.r.t. the size of the original graph $O(|E|)$, same order as standard GNNs. Runtime of \verb|DC-MsgPassing| is measured in Appendix~\ref{app:runtime}.

\subsection{Training objective function $\mathcal{L}_{\rm train}$}

Our task is supervised node-classification. We use cross entropy loss $\mathcal{L}_{ce}$ to train~\model~along with two regularizing loss functions to facilitate the clustering process.
\begin{equation}
    \label{eq:loss_full}
    \mathcal{L}_{\rm train} = \mathcal{L}_{\rm ce} + \omega_1 \mathcal{L}_{\rm ortho} + \omega_2 \mathcal{L}_{\rm sim},
\end{equation}
where $\mathcal{L}_{\rm ortho}$ and $\mathcal{L}_{\rm sim}$ are orthogonality and similarity losses, weighted by hyperparameters $\omega_1$ and $\omega_2$, as described below.

\paragraph{Orthogonality loss ($\mathcal{L}_{\rm ortho}$):} To encourage the clusters to be distinct, 
we adopt a regularizing orthogonality loss function~\citep{bianchi2020spectral} $    \mathcal{L}_{\rm ortho} = 
    \bigg{\lVert} \frac{C^{\top}C}{\|C^{\top}C\|_F} - \frac{I_{|\Omega|}}{\sqrt{|\Omega|}}\bigg{\rVert}_F $, where $\| \cdot \|_F$ is the Frobenius norm. This pushes the cluster-nodes to be orthogonal to each other.

\paragraph{Similarity loss ($\mathcal{L}_{\rm sim}$):}To further enhance the clustering process, we introduce $\mathcal{L}_{\rm sim}$ that encourages node similarity to only a single cluster. 
To achieve this, we set $|\Omega|$ to be multiple of the number of classes and associate a set of cluster-nodes $\Omega^\tau$ with each class $\tau$.
We then compute distances between the node embedding and the cluster-node embeddings associated with its labelled class,
select the cluster-node embedding that is most similar to the node with a $max$ operator, and push them closer.
If a training node $i$ belongs to class $\tau$, $c_j^\tau$ is the $j^{th}$ cluster embedding associated with class $\tau$. Let $\Lambda$ be a similarity function, and \( \mathcal{V}_+ \) be the set of training nodes, 
then $\mathcal{L}_{\rm sim}$ is defined as 
\begin{gather}
    \mathcal{L}_{\rm sim} = \frac{1}{|\Omega||\mathcal{V}_+|} \text{\Large$\sum_{i\in \mathcal{V}_+}$} \; \Biggl[ s_i^{\tau} + \log \sum_{\tau' \neq \tau} \exp{ \big( - s_i^{\tau'} \big)}  \Biggl],  \text{ where } s_i^\tau = \max_{j \in |\Omega^{\tau}|} \;\Lambda\Bigl( z_i^L, c_{j}^{\tau}\Bigl).
    \label{eq:loss_comp}
\end{gather}

\section{Experiments}
In this section, we empirically validate the capabilities of our proposed solution through extensive experiments and ablation studies.

\paragraph{Datasets.}We conduct experiments on fourteen datasets, a mix of small-scale and large-scale datasets. Eleven of them are non-homophilous, including: 
(1) Roman-empire, Amazon-ratings, Minesweeper, Tolokers, Questions
~\citep{platonov2023critical};
(2) Penn94, Genius, Cornell5, Amherst41
~\citep{lim2021large}; 
(3) Wisconsin
~\citep{pei2020geom};
(4) a US election dataset~\citep{jia2020residual}. 
Three are homophilous citation networks: Cora, Citeseer and Pubmed~\citep{pei2020geom}. We use the original train/validation/test splits when they exist. Otherwise we follow the splits specified in~\citep{platonov2023critical, lim2021large, chen2020simple}. 
Descriptions and statistics of the datasets are in Appendix.~\ref{app:sec:dataset}.

Baselines, implementation and training details 
 can be found in Appendix.~\ref{app:sec:implementation}.

\begin{table*}[t]
\centering
\caption{\small Classification performance comparison across various heterophilous and homophilous datasets. We report ROC AUC for Genius~\citep{lim2021large} and accuracy for other datasets. OOM refers to out-of-memory errors. 
}
\label{tab:combined_results}
\resizebox{\textwidth}{!}%
{
\begin{tabular}{ccccccc|cccc}
\toprule
& \multicolumn{6}{c|}{\textbf{Heterophilous}} & \multicolumn{3}{c}{\textbf{Homophilous}} \\
\midrule
 & \textbf{Penn94} & \textbf{Genius} & \textbf{Cornell5} & \textbf{Amherst41} & \textbf{US-election} & \textbf{Wisconsin} & \textbf{Cora} & \textbf{Citeseer} & \textbf{Pubmed} \\
\midrule
MLP & 73.61 \gray{(0.40)} & 86.68 \gray{(0.09)} & 68.86 \gray{(1.83)} & 60.43 \gray{(1.26)} & 81.92 \gray{(1.01)} & 85.29 \gray{(3.31)} & 75.69 \gray{(2.00)} & 74.02 \gray{(1.90)} & 87.16 \gray{(0.37)} \\
GCN & 82.47 \gray{(0.27)} & 87.42 \gray{(0.37)} & 80.15 \gray{(0.37)} & 81.41 \gray{(1.70)} & 82.07 \gray{(1.65)} & 51.76 \gray{(3.06)} & 86.98 \gray{(1.27)} & 76.50 \gray{(1.36)} & 88.42 \gray{(0.50)} \\
GAT & 81.53 \gray{(0.55)} & 55.80 \gray{(0.87)} & 78.96 \gray{(1.57)} & 79.33 \gray{(2.09)} & 84.17 \gray{(0.98)} & 49.41 \gray{(4.09)} & 87.30 \gray{(1.10)} & 76.55 \gray{(1.23)} & 86.33 \gray{(0.48)} \\
MixHop & 83.47 \gray{(0.71)} & 90.58 \gray{(0.16)} & 78.52 \gray{(1.22)} & 76.26 \gray{(2.56)} & 85.90 \gray{(1.55)} & 75.88 \gray{(4.90)} & 87.61 \gray{(0.85)} & 76.26 \gray{(1.33)} & 85.31 \gray{(0.61)} \\
GCNII & 82.92 \gray{(0.59)} & 90.24 \gray{(0.09)} & 78.85 \gray{(0.78)} & 76.02 \gray{(1.38)} & 82.90 \gray{(0.29)} & 80.39 \gray{(3.40)} & 88.37 \gray{(1.25)} & 77.33 \gray{(1.48)} & 90.15 \gray{(0.43)} \\
H$_2$GCN & 81.31 \gray{(0.60)} & OOM & 78.46 \gray{(0.75)} & 79.64 \gray{(1.63)} & 85.53 \gray{(0.77)} & 87.65 \gray{(4.98)} & 87.87 \gray{(1.20)} & 77.11 \gray{(1.57)} & 89.49 \gray{(0.38)} \\
WRGAT & 74.32 \gray{(0.53)} & OOM & 71.11 \gray{(0.48)} & 62.59 \gray{(2.46)} & 84.45 \gray{(0.56)} & 86.98 \gray{(3.78)} & 88.20 \gray{(2.26)} & 76.81 \gray{(1.89)} & 88.52 \gray{(0.92)} \\
GPR-GNN & 81.38 \gray{(0.16)} & 90.05 \gray{(0.31)} & 73.30 \gray{(1.87)} & 67.00 \gray{(1.92)} & 84.49 \gray{(1.09)} & 82.94 \gray{(4.21)} & 87.95 \gray{(1.18)} & 77.13 \gray{(1.67)} & 87.54 \gray{(0.38)} \\
GGCN & 73.62 \gray{(0.61)} & OOM & 71.35 \gray{(0.81)} & 66.53 \gray{(1.61)} & 84.71 \gray{(2.60)} & 86.86 \gray{(3.29)} & 87.95 \gray{(1.05)} & 77.14 \gray{(1.45)} & 89.15 \gray{(0.37)} \\
ACM-GCN & 82.52 \gray{(0.96)} & 80.33 \gray{(3.91)} & 78.17 \gray{(1.42)} & 70.11 \gray{(2.10)} & 85.14 \gray{(1.33)} & 88.43 \gray{(3.22)} & 87.91 \gray{(0.95)} & 77.32 \gray{(1.70)} & 90.00 \gray{(0.52)} \\
LINKX & 84.71 \gray{(0.52)} & 90.77 \gray{(0.27)} & 83.46 \gray{(0.61)} & 81.73 \gray{(1.94)} & 84.08 \gray{(0.67)} & 75.49 \gray{(5.72)} & 84.64 \gray{(1.13)} & 73.19 \gray{(0.99)} & 87.86 \gray{(0.77)} \\
GloGNN++ & 85.74 \gray{(0.42)} & 90.91 \gray{(0.13)} & 83.96 \gray{(0.46)} & 81.81 \gray{(1.50)} & 85.48 \gray{(1.19)} & 88.04 \gray{(3.22)} & 88.33 \gray{(1.09)} & 77.22 \gray{(1.78)} & 89.24 \gray{(0.39)} \\
\midrule
\model & \textbf{86.69} \gray{(0.22)} & \textbf{91.70} \gray{(0.08)} & \textbf{84.68} \gray{(0.24)} & \textbf{82.94} \gray{(1.59)} & \textbf{89.59} \gray{(1.60)} & \textbf{91.67} \gray{(1.95)} & \textbf{89.13} \gray{(1.18)} & \textbf{77.93} \gray{(1.82)} & \textbf{91.00} \gray{(1.28)} \\
\bottomrule
\end{tabular}%
}
\end{table*}
\vspace{-5pt}
\begin{table*}[t]
    \centering
    \caption{\small Classification performance comparison on more recent heterophilous datasets \citep{platonov2023critical}. Following~\citep{platonov2023critical}, we report accuracy for Roman-empire and Amazon-ratings, and ROC AUC for the rest.
    }
    \label{tab:results}
    \vspace{5pt}
    \resizebox{0.68\textwidth}{!}{
    \begin{tabular}{lccccc}
    \toprule
    & \textbf{Roman-empire} & \textbf{Amazon-ratings} & \textbf{Minesweeper} & \textbf{Tolokers} & \textbf{Questions}
    \\
    \midrule
ResNet & 65.88 \gray{(0.38)} & 45.90 \gray{(0.52)} & 50.89 \gray{(1.39)} & 72.95 \gray{(1.06)} & 70.34 \gray{(0.76)} \\
ResNet+SGC & 73.90 \gray{(0.51)} & 50.66 \gray{(0.48)} & 70.88 \gray{(0.90)} & 80.70 \gray{(0.97)} & 75.81 \gray{(0.96)} \\
ResNet+adj & 52.25 \gray{(0.40)} & 51.83 \gray{(0.57)} & 50.42 \gray{(0.83)} & 78.78 \gray{(1.11)} & 75.77 \gray{(1.24)} \\
\midrule
GCN & 73.69 \gray{(0.74)} & 48.70 \gray{(0.63)} & 89.75 \gray{(0.52)} & 83.64 \gray{(0.67)} & 76.09 \gray{(1.27)} \\
SAGE & 85.74 \gray{(0.67)} & 53.63 \gray{(0.39)} & 93.51 \gray{(0.57)} & 82.43 \gray{(0.44)} & 76.44 \gray{(0.62)} \\
GAT & 80.87 \gray{(0.30)} & 49.09 \gray{(0.63)} & 92.01 \gray{(0.68)} & 83.70 \gray{(0.47)} & 77.43 \gray{(1.20)} \\
GAT-sep & 88.75 \gray{(0.41)} & 52.70 \gray{(0.62)} & 93.91 \gray{(0.35)} & 83.78 \gray{(0.43)} & 76.79 \gray{(0.71)} \\
GT & 86.51 \gray{(0.73)} & 51.17 \gray{(0.66)} & 91.85 \gray{(0.76)} & 83.23 \gray{(0.64)} & 77.95 \gray{(0.68)} \\
GT-sep & 87.32 \gray{(0.39)} & 52.18 \gray{(0.80)} & 92.29 \gray{(0.47)} & 82.52 \gray{(0.92)} & 78.05 \gray{(0.93)} \\
GPS\textsuperscript{GAT+Performer} & 87.04 \gray{(0.58)} & 49.92 \gray{(0.68)} & 91.08 \gray{(0.58)} & 84.38 \gray{(0.91)} & 77.14 \gray{(1.49)} \\
\midrule
H$_2$GCN & 60.11 \gray{(0.52)} & 36.47 \gray{(0.23)}  &89.71 \gray{(0.31)} &  73.35 \gray{(1.01)} & 63.59 \gray{(1.46)} \\
CPGNN &63.96 \gray{(0.62)} & 39.79 \gray{(0.77)} & 52.03 \gray{(5.46)} & 73.36 \gray{(1.01)} & 65.96 \gray{(1.95)}  \\
GPR-GNN & 64.85 \gray{(0.27)} & 44.88 \gray{(0.34)} & 86.24 \gray{(0.61)} & 72.94 \gray{(0.97)} & 55.48 \gray{(0.91)} \\
FSGNN & 79.92 \gray{(0.56)} & 52.74 \gray{(0.83)} & 90.08 \gray{(0.70)} & 82.76 \gray{(0.61)} & 78.86 \gray{(0.92)} \\
GloGNN & 59.63 \gray{(0.69)} & 36.89 \gray{(0.14)} & 51.08 \gray{(1.23)} & 73.39 \gray{(1.17)} & 65.74 \gray{(1.19)} \\
FAGCN & 65.22 \gray{(0.56)} & 44.12 \gray{(0.30)} & 88.17 \gray{(0.73)} & 77.75 \gray{(1.05)} & 77.24 \gray{(1.26)} \\
GBK-GNN & 74.57 \gray{(0.47)} & 45.98 \gray{(0.71)} & 90.85 \gray{(0.58)} & 81.01 \gray{(0.67)} & 74.47 \gray{(0.86)} \\
JacobiConv & 71.14 \gray{(0.42)} & 43.55 \gray{(0.48)} & 89.66 \gray{(0.40)} & 68.66 \gray{(0.65)} & 73.88 \gray{(1.16)} \\
\midrule
\model & \textbf{89.96} \gray{(0.35)} & 51.11 \gray{(0.47)} & \textbf{98.50} \gray{(0.21)} & \textbf{85.88} \gray{(0.81)} & \textbf{78.96} \gray{(0.60)} \\

    \bottomrule
    \end{tabular}
    }
\end{table*}

\subsection{Comparison with baselines on heterophilous graphs}
We first conduct experiments on heterophilous datasets where long-range information is beneficial and neighborhood aggregation needs special attention. We achieve state-of-the-art on all six heterophilous datasets in Tab.~\ref{tab:combined_results}. 
The performance uplift is especially pronounced on US-election and Wisconsin, where our method outperforms baselines by more than 3\%. 
Furthermore, our method achieves state-of-the-art on four out of five heterophilous datasets proposed by~\citep{platonov2023critical}, as shown in Tab.~\ref{tab:results}. Our performance is especially strong on Minesweeper, where we surpass existing baselines by 5\%.
Baseline results are from~\citep{li2022finding},~\citep{platonov2023critical} and~\citep{muller2023attending} except for Cornell5, Amherst41 and US-election, which we reproduced following the code in~\citep{li2022finding}. Details are in Appendix.~\ref{app:sec:implementation}.

\subsection{Comparison with baselines on homophilous graphs}
For a more comprehensive evaluation, we also run \model~on three well-known homophilous citation network datasets Cora, Citeseer and Pubmed~\citep{pei2020geom}. 
As shown in Tab.~\ref{tab:combined_results}, \model~achieves the best performance on all three datasets, outperforming both general-purpose GNN baselines and those proposed specifically for heterophilous graphs. Our strong performance on both homophilous and heterophilous graphs shows that our network with the built-in clustering inductive bias is flexible and adaptive, capable of effective information aggregation in graphs with various homophily levels. 
\subsection{Benefit of capturing long range information in sparse label settings}

Our method introduces shortcuts between distant nodes via the cluster-nodes. 
To empirically evaluate the effects of shortcut construction, we consider a generalized scenario on homophilous graphs where information from labeled nodes needs to propagate over a long distance to reach most unlabeled nodes.
Specifically, we evaluate our method with only a small number of training labels per class. 

\begin{wraptable}[]{l}{0.49\linewidth}
\vskip -0.07in
	\setlength{\belowcaptionskip} {-6pt}
	\setlength{\abovecaptionskip} {-4pt}
	\caption{\small Classification accuracy with 5 labels per class.}
	\label{tab:low_label_rate}
	\vskip 0.1in
	\Huge
	\centering
			\begin{sc}
			 \resizebox{0.78\linewidth}{!}{%
\begin{tabular}{lccccccc}
\toprule
                     & \textbf{Cora} & \textbf{Citeseer} & \textbf{Pubmed} \\
        \midrule
      GCN &   69.23 \gray{(3.39)}   &     63.03  \gray{(4.48)}     & 68.00 \gray{(3.75)} \\  
\model & \textbf{72.17} \gray{(1.76)} & 62.14 \gray{(2.65)}  & \textbf{75.07} \gray{(0.82)} \\
   
      \bottomrule

\end{tabular}%
				}
			\end{sc}
	\vskip -0.1in
\end{wraptable}
Our hypothesis is that information propagation becomes more challenging when useful training information is more scarce, making the effects of shortcut construction more pronounced in sparsely-annotated graph datasets. The hypothesis is supported by experiment results in Tab.~\ref{tab:low_label_rate}, where our method outperforms vanilla GCN on Cora and Pubmed with substantial improvements.

\subsection{Alleviating oversquashing}
\begin{figure}[htbp]
    \centering
    \resizebox{0.95\columnwidth}{!}{
    \begin{subfigure}[b]{0.32\textwidth}  
        \centering
        \includegraphics[width=\textwidth]{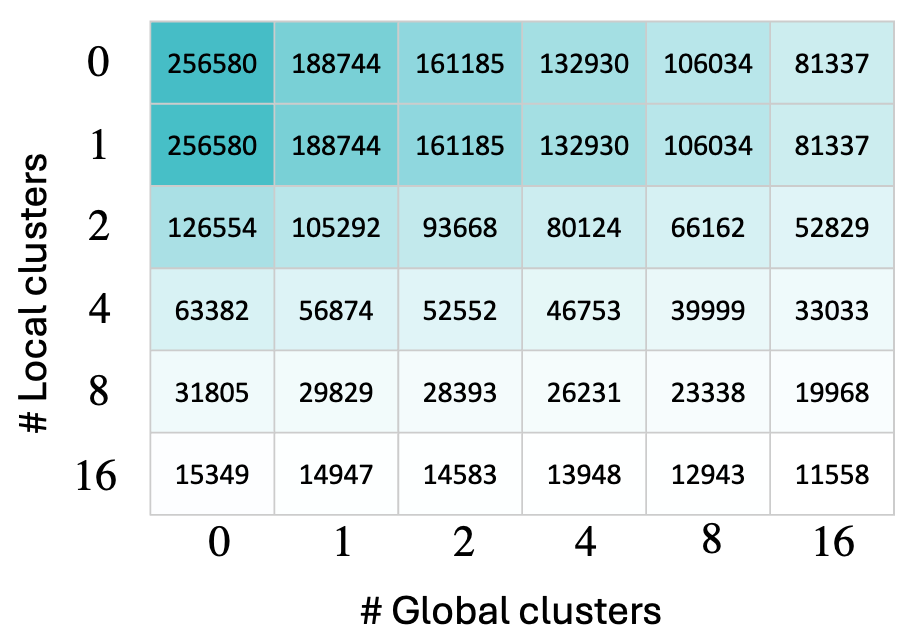}  
        \caption{Amherst41}
        \label{fig:os-1}
    \end{subfigure}
    \hspace{1em}
    \hfill  
    \begin{subfigure}[b]{0.32\textwidth}
        \centering
        \includegraphics[width=\textwidth]{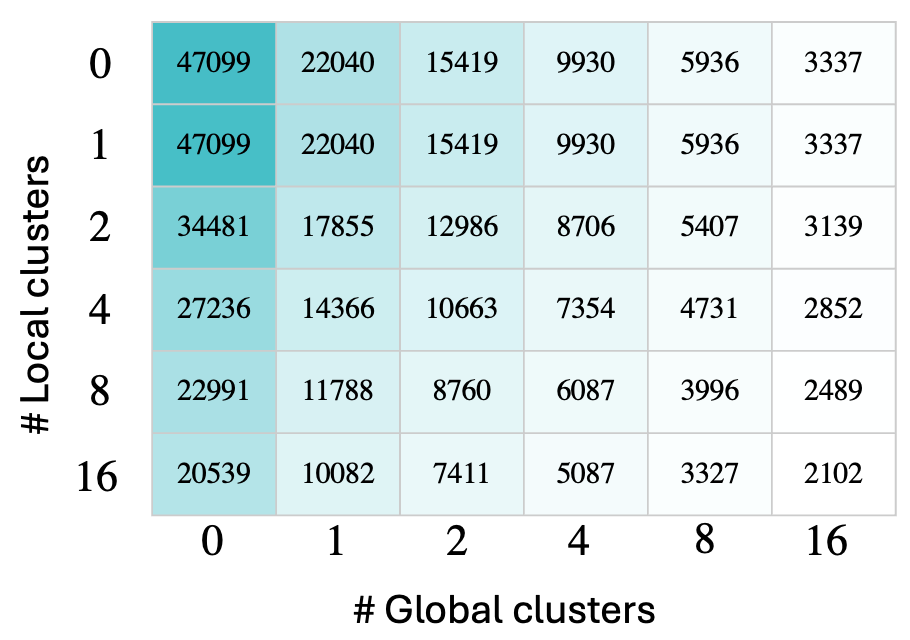}
        \caption{Wisconsin}
        \label{fig:os-2}
    \end{subfigure}
    \hspace{1em}
    \hfill  
    \begin{subfigure}[b]{0.32\textwidth}
        \centering
        \includegraphics[width=\textwidth]{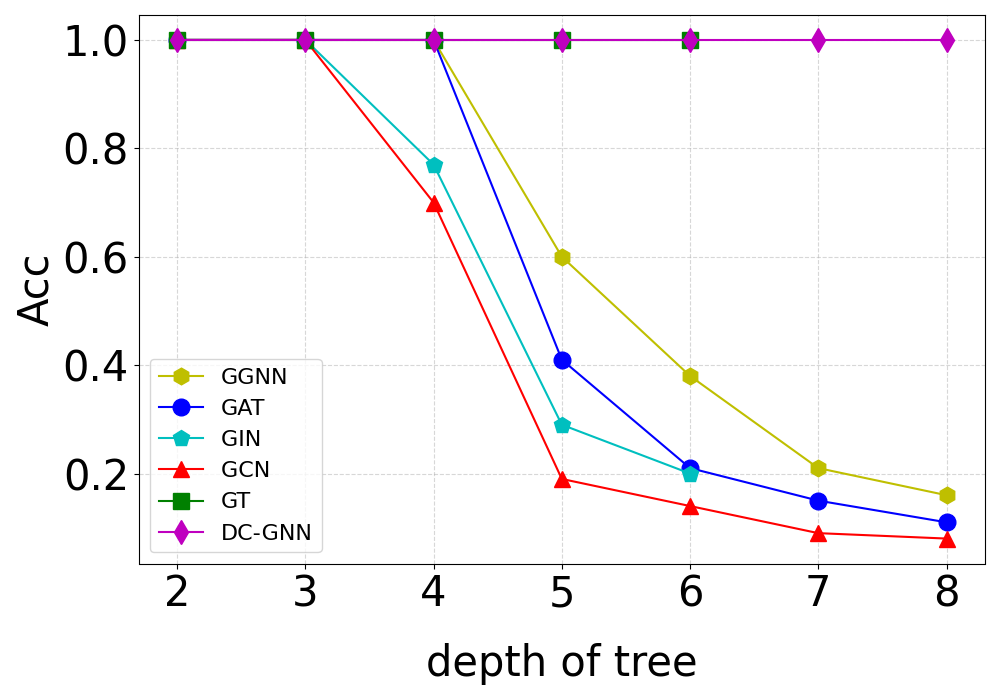}
        \caption{Tree-NeighborsMatch}
        \label{fig:os-3}  
    \end{subfigure}
    }
    \caption{\small (a)-(b) Total effective resistance heatmap. (c) Accuracy for Tree-NeighborsMatch dataset.}
    \label{fig:effective_resistance}
\end{figure}

 Total effective resistance ($R_{\text{tot}}$) is established as an indicator of oversquashing~\cite{black2023understanding}, prompting the development of various graph rewiring strategies to diminish $R_{\text{tot}}$ within the underlying graph and thus address oversquashing. Our approach contributes to this endeavor by introducing cluster-nodes. This effectively creates new pathways among the original nodes, thereby reducing the graph's $R_{\text{tot}}$ and aiding in mitigating the oversquashing issue~\cite{black2023understanding}. 
To validate this, we conduct an empirical analysis of the total pairwise effective resistance among the original nodes in our bipartite graph, with varying number of global and local clusters. Fig.~\ref{fig:os-1} and Fig.~\ref{fig:os-2} display a heatmap of $R_{\text{tot}}$, with darker shades representing higher $R_{\text{tot}}$ values. The results indicate that $R_{\text{tot}}$ decreases sharply as we increase the number of both global and local clusters. 
In addition, we also perform experiments on synthetic random graphs with varying degrees of sparsity and show that the bipartite graph construction reduces $R_{\text{tot}}$ on these graphs. Details can be found in Appendix~\ref{app:sec:er}.

To further validate the oversquashing mitigation capability, we conduct experiments on Tree-NeighborsMatch dataset~\citep{alon2020bottleneck}, which requires long-range interaction between leaf nodes and the root node of tree graphs with varying depths. In Fig.~\ref{fig:os-3}, \model~achieves perfect performance along with GT~\citep{muller2023attending} on all depth settings, significantly outperforming other message passing GNNs.

\subsection{Ablation studies}
\subsubsection{Effects of each term in \lambdaclusterLoss}

To validate the effectiveness of our \msgPassing~algorithm, we conduct an ablation study on the individual components of our objective function \lambdaclusterLoss. Specifically, we vary the parameters by setting (1) $\alpha$ to 0, (2) $\alpha$ to 1 and (3) $\beta$ to 0, aiming to ablate the contributions of the global clustering term, local clustering term and the node fidelity term respectively.

 \begin{wraptable}[]{r}{0.56\linewidth}
  \vskip -0.07in
     \setlength{\belowcaptionskip} {-4pt}
     \setlength{\abovecaptionskip} {-4pt}
     \caption{\small Effects of each term in \lambdaclusterLoss.}
     \label{tab:ablation_L_cluster}
     \vskip 0.1in
     \Huge
     \centering
               \begin{sc}
                      \resizebox{\linewidth}{!}{%
            \begin{tabular}{lcccc}
                \toprule
                    & \textbf{Genius} & \textbf{US-election} & \textbf{Penn94} & \textbf{Amherst41} \\ 
                \midrule
                    \model & 91.70 \gray{(0.08)} & 89.59 \gray{(1.60)} & 86.69\gray{(0.22)} & 82.94 \gray{(1.59)}  \\
                \midrule
                (-)Global& 91.62 \gray{(0.07)} &  88.77 \gray{(2.21)} &   84.61 \gray{(0.42)} & 81.43 \gray{(1.53)}\\
                (-)Local & 87.05 \gray{(0.09)} & 83.26 \gray{(1.77)} & 86.69 \gray{(0.22)} & 80.77 \gray{(2.04)}\\
                (-)Fidelity  & 91.08 \gray{(0.04)} & 87.84 \gray{(2.67)} & 86.69 \gray{(0.22)} & 82.28 \gray{(1.32)}\\
                \bottomrule
            \end{tabular}
        }
               \end{sc}
     \vskip -0.1in
 \end{wraptable}

 Results from Tab.~\ref{tab:ablation_L_cluster} indicate that the contributions of global and local clustering vary on different datasets. Specifically, the contribution of local clustering is dominant on Genius, US-election and Amherst41. This is expected as local clustering facilitates message passing via adjacent nodes and embeds graph structure information into the model. The contribution of global clustering is most pronounced on Penn94, indicating the usefulness of long-range information in Penn94 captured by global clustering term. Additionally, the results show that all three terms—local clustering, global clustering, and node fidelity—contribute to the overall efficacy of our model.

\subsubsection{Effects of $\mathcal{L}_{\rm ortho}$ and $\mathcal{L}_{\rm sim}$}

\begin{wraptable}[]{r}{0.56\linewidth}
  \vskip -0.07in
  \setlength{\belowcaptionskip} {-4pt}
  \setlength{\abovecaptionskip} {-4pt}
  \caption{\small Effects of $\mathcal{L}_{\rm ortho}$ and $\mathcal{L}_{\rm sim}$.}
  \label{tab:ablation_losses}
  \vskip 0.1in
  \Huge
  \centering
  \begin{sc}
    \resizebox{\linewidth}{!}{
      \begin{tabular}{lcccc}
        \toprule
        & \textbf{Genius} & \textbf{US-election} & \textbf{Penn94} & \textbf{Amherst41} \\ 
        \midrule
        DC-GNN & 91.70 \gray{(0.08)} & 89.59 \gray{(1.60)} & 86.69 \gray{(0.22)} & 82.94 \gray{(1.59)} \\
        \midrule
        (-) $\mathcal{L}_{\rm sim}$ & 91.70 \gray{(0.08)}& 89.08  \gray{(1.46)} &  86.65 \gray{(0.15)}  & 82.22 \gray{(1.03)}\\
        (-) $\mathcal{L}_{\rm ortho}$ & 91.68 \gray{(0.08)}& 88.72  \gray{(1.20)} & 86.64 \gray{(0.27)}  & 82.35 \gray{(1.25)}\\
        (-)$\mathcal{L}_{\rm sim}$, $\mathcal{L}_{\rm ortho}$ & 91.68 \gray{(0.08)} & 88.61 \gray{(1.57)} & 86.40 \gray{(0.25)} & 81.75 \gray{(1.07)} \\
        \bottomrule
      \end{tabular}
    }
  \end{sc}
  \vskip -0.1in
\end{wraptable}

We introduced \emph{orthogonality} ($\mathcal{L}_{\rm ortho}$) and \emph{similarity} ($\mathcal{L}_{\rm sim}$) losses to regularize and assist the clustering process.  In this ablation, we evaluate the effects of these losses on model performance. As shown in Tab.~\ref{tab:ablation_losses}, $\mathcal{L}_{\rm ortho}$ and $\mathcal{L}_{\rm sim}$ generally help to improve the scores across datasets. The results provide some evidence that the two losses indeed facilitate the clustering process, plausibly by encouraging distinct cluster representations and node-cluster alignment as hypothesized. 
\section{Conclusion}
\label{sec:conclusion}
This paper tackles the dual challenges in Graph Neural Networks: capturing global long-range information and preserving performance in heterophilous local neighborhoods. We proposed a novel differentiable framework that seamlessly embeds a clustering inductive bias into the message passing mechanism, facilitated by the introduction of cluster-nodes.
At the heart of our approach is an optimal transport based implicit clustering objective function whose optimization presents a considerable challenge. We addressed this through an iterative optimization strategy, alternating between the computation of cluster assignments and the refinement of node/cluster-node embeddings. Importantly, the derived optimization steps effectively function as message passing steps on the bipartite graph.
The message passing algorithm is efficient and we show that it is guaranteed to converge.
The efficacy of our clustering-centric method in capturing both local nuances and global structures within graphs is supported by extensive experiments on both heterophilous and homophilous datasets.


{\small
\bibliographystyle{plainnat}
\bibliography{reference}
}

\newpage
\appendix

\section{Notations}
\label{app:notations}
All notations are listed in Tab.~\ref{tab:comm_notations}.

\begin{table}[htbp]
    \centering
    \caption{Table for notations.}
    \label{tab:comm_notations}
    \vspace{5pt}
    \resizebox{0.7\textwidth}{!}{
    \begin{tabular}{lp{10cm}} 
    \toprule
    Variable & Definition \\
    \midrule
	$\mathcal{G}$ & bipartite graph, denoted as ($\mathcal{V}$, $\mathcal{C}$, $\mathcal{E}$) \\
	$G$ & original graph, denoted as ($V$, $E$) \\
	$V$ & set of nodes from the original graph $G$ \\
        $\mathcal{V}$ & vertices of $\mathcal{G}$, direct copy of $V$ \\
	$E$ & set of edges from the original graph $G$ \\
	$\mathcal{E}$ & set of edges in the bipartite graph $\mathcal{G}$ \\
 	$\mathcal{N}_i$ & set of one-hop neighbors of node $i$ \\
	 $\mathcal{N}_i^+$ & node $i$ and its one-hop neighbors (ego-neighborhood of $i$) \\
	$\mathcal{C}$ & set of cluster-nodes in the bipartite graph $\mathcal{G}$ \\
	$\Omega$ & set of global cluster-nodes \\
	$\Gamma$ & set of local cluster-nodes \\
        $\Gamma_i$ & set of local cluster-nodes associated with $\mathcal{N}_i^+$ \\
	$C$ & set of cluster-node embeddings \\
	$Z$ & set of node embeddings \\
	$Y$ & predicted class probabilities \\
	$X_{\rm input}$ & input features \\
	$X$ & transformed input features \\
	$P$ & overall cluster assignment matrix. $P \in \mathbb{R}_+^{|\mathcal{V}| \times |\mathcal{C}|}$ \\
	$P^{\Omega}$ & global cluster assignment matrix. $P^{\Omega} \in \mathbb{R}_+^{|\mathcal{V}| \times |\Omega|}$ \\
	$P^{\Gamma}$ & local cluster assignment matrix. $P^{\Gamma_i} \in \mathbb{R}_+^{|\mathcal{N}_i^+ | \times |\Gamma_i|}$ for each node $i$ \\
	$d(\cdot)$ & distance function \\
	$z_i$ & node embeddings of node $i$ \\
	$x_i$ & node features of node $i$ after initial transformation \\
	$c_j^\Omega$ & node embeddings of $j^{th}$ global cluster-node \\
	$c_j^{\Gamma_i}$ & node embeddings of $j^{th}$ local cluster-node for $\mathcal{N}_i^+$ \\
	$\alpha$ & scalar parameter that balances global and local clustering objectives. $\alpha \in [0,1]$ \\
	$\beta$ & scalar parameter for node fidelity term \\
	$h(\cdot)$ & entropy function \\
	$P^{\Omega*}$ & optimal global soft-assignment matrix. $P^{\Omega*} \in \mathbb{R}_+^{|\mathcal{V}| \times |\Omega|}$ \\
        $M_{ij}$ & cost of assigning node $i$ to cluster $j$ \\
        	$M$ & cost matrix \\
	$\mathbf{u}^\top$ & $\mathbf{u}^\top = \begin{bmatrix} |\mathcal{V}| ^{-1} & \cdots & |\mathcal{V}| ^{-1} \end{bmatrix}_{1 \times |\mathcal{V}|}$  \\
	$\mathbf{v}^\top$ & $\mathbf{v}^\top = \begin{bmatrix} |\Omega|^{-1} & \cdots & |\Omega|^{-1} \end{bmatrix}_{1 \times |\Omega|}$ \\
	$U(\mathbf{u}, \mathbf{v})$ & $U(\mathbf{u}, \mathbf{v}) = \{ P \in \mathbb{R}_+^{|\mathcal{V}| \times |\Omega|} : P\mathbf{1}_{|\Omega|} = \mathbf{u}, P^\top\mathbf{1}_{|\mathcal{V}|}  = \mathbf{v} \} $ \\
	$\langle \cdot, \cdot \rangle $ & Frobenius dot product \\
	$P^\Omega_{ij}$ & the amount of assignment from node $i \in \mathcal{V}$ to cluster $j \in \Omega$ \\
	$\lambda$ & scalar for entropy regularization\\
	$B$ & initial value $e^{-\lambda M}$ \\
	$U$ & diagonal matrix \\
	$V$ & diagonal matrix \\
	$T$ & number of iterations for Sinkhorn-Knopp algorithm \\
        $\mathbf{k}$ &  $\mathbf{k^\top} = \begin{bmatrix} |\Omega| \mathbf{1}_{|\Omega|}; |\Gamma_i| \mathbf{1}_{|\Gamma_i|};\cdots; |\Gamma_{|\mathcal{V}|}| \mathbf{1}_{|\Gamma_i|}
  \end{bmatrix}_{1 \times |\mathcal{C}| }$\\
         $\tau$ & class $\tau$ for our node classification task\\
         $\tau'$ & class that is not class $\tau$\\ 
         $\Omega^\tau$ & set of global cluster-nodes associated with class $\tau$ \\
         $L$ &number of \msgPassing~iterations\\
         $c_j^\tau$ & the $j^{th}$ cluster embedding associated with class $\tau$\\ 
         $z_i^L$ & embeddings for node $i$ after $L$ iterations\\
	$\gamma$ & a constant. $\gamma = ({\alpha}{N}^{-1} + \beta + 1-\alpha)^{-1}$ \\
	$\hat{P}^{\Gamma_u}$ & $\hat{P}^{\Gamma_u} \in \mathbb{R}_+^{|\mathcal{V}| \times |\Gamma_u|}$ broadcasted local cluster assignment matrix from $P^{\Gamma_u} \in \mathbb{R}_+^{|\mathcal{N}_i^+ | \times |\Gamma_i|}$, defined in Eq.(\ref{app:eq:broadcast_P}) \\
	$\| \cdot \|_F$ & Frobenius norm \\
	$\Lambda (\cdot)$ & similarity function \\
	$\mathcal{V}_+$ & set of training nodes \\
    \bottomrule
    \end{tabular}
    }
\end{table}

\section{Proof and derivation}
\label{app:sec:proof_and_derivation}
\subsection{Proof of Theorem \ref{thrm:converge}}
\label{app:thrm:converge}
To prove Theorem \ref{thrm:converge}, we first introduce the following Lemma.
\begin{lemma}\label{app:thrm:lem1}
    Let $\mathcal{L}^{\lambda}_{\text{cluster}}$ be the loss function optimized by the \msgPassing~algorithm when the optimal transport components are replaced by the entropic regularized versions. Then, $\mathcal{L}^{\lambda}_{\text{cluster}}$ is lower-bounded by:
    \begin{equation*}
        \mathcal{L}_{\rm cluster}^{\lambda} \geq \frac{\alpha}{\lambda}\log\frac{1}{|\mathcal{V}| |\Omega|} + \frac{1-\alpha}{\lambda}\sum_{i \in \mathcal{V}}\log\frac{1}{|\mathcal{N}_i^+| |\Gamma_i|}.
    \end{equation*}
\end{lemma}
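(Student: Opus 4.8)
The plan is to bound $\mathcal{L}_{\rm cluster}^{\lambda}$ term by term, exploiting the clean separation in Eq.~(\ref{eq:L_lambda_cluster}) between the transport-cost part $\mathcal{L}_{\rm cluster}$ and the two entropic penalties. Since the penalties enter with a negative sign, a lower bound on $\mathcal{L}_{\rm cluster}^{\lambda}$ will follow from a lower bound on $\mathcal{L}_{\rm cluster}$ together with \emph{upper} bounds on the entropies $h(P^{\Omega})$ and $h(P^{\Gamma_i})$.

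First I would argue that $\mathcal{L}_{\rm cluster} \ge 0$. Each assignment entry $P^{\Omega}_{ij}$ and $P^{\Gamma_i}_{uj}$ is non-negative by definition (they live in $\mathbb{R}_+$), and with $d$ taken to be the squared Euclidean distance every factor $d(z_i,c_j^{\Omega})$, $d(z_u,c_j^{\Gamma_i})$, $d(z_i,x_i)$ is non-negative. Hence the global-clustering, local-clustering, and node-fidelity sums are each non-negative, and so is their $\alpha$- and $\beta$-weighted combination, giving $\mathcal{L}_{\rm cluster}\ge 0$.

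Next I would bound the entropies from above by their maximum-entropy value. The feasibility set $U(\mathbf{u},\mathbf{v})$ forces $P^{\Omega}$ to be a coupling with total mass $\mathbf{u}^{\top}\mathbf{1}=1$ (and likewise each $P^{\Gamma_i}$ has total mass $1$), so each assignment matrix is a genuine probability distribution over its $|\mathcal{V}|\cdot|\Omega|$ entries (respectively $|\mathcal{N}_i^+|\cdot|\Gamma_i|$ entries). By Gibbs' inequality the Shannon entropy of a distribution supported on $N$ atoms is at most $\log N$, with equality at the uniform distribution; moreover the uniform coupling $P_{ij}=1/(|\mathcal{V}||\Omega|)$ actually lies in $U(\mathbf{u},\mathbf{v})$, since its row sums equal $1/|\mathcal{V}|=\mathbf{u}_i$ and column sums equal $1/|\Omega|=\mathbf{v}_j$, so the bound is attained inside the feasible set. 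This yields $h(P^{\Omega})\le \log(|\mathcal{V}||\Omega|)$ and $h(P^{\Gamma_i})\le \log(|\mathcal{N}_i^+||\Gamma_i|)$.

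Combining, I would substitute these three estimates into Eq.~(\ref{eq:L_lambda_cluster}) to obtain $\mathcal{L}_{\rm cluster}^{\lambda}\ge -\frac{\alpha}{\lambda}\log(|\mathcal{V}||\Omega|)-\frac{1-\alpha}{\lambda}\sum_{i\in\mathcal{V}}\log(|\mathcal{N}_i^+||\Gamma_i|)$, which is exactly the stated bound after rewriting $-\log(\cdot)$ as $\log\frac{1}{(\cdot)}$. There is no serious obstacle here beyond bookkeeping; the only points needing care are confirming the normalization of the assignment matrices (that the prescribed marginals $\mathbf{u},\mathbf{v}$ sum to one, so $P$ is a probability distribution on which Gibbs' inequality applies) and tracking signs when passing the entropy upper bounds through the negative coefficients. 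This lemma then feeds the proof of Theorem~\ref{thrm:converge}: each iteration of \msgPassing~is a block-coordinate-descent step that does not increase $\mathcal{L}_{\rm cluster}^{\lambda}$, and a monotone non-increasing sequence bounded below by this lemma must converge.
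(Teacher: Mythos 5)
Your proposal is correct and follows essentially the same route as the paper's proof: lower-bound the transport-cost and fidelity terms by zero using non-negativity of the assignments and distances, upper-bound each entropy by the log of the number of entries (the paper derives this via Jensen's inequality, you cite the equivalent maximum-entropy/Gibbs fact), and combine through the negative entropy coefficients in $\mathcal{L}_{\rm cluster}^{\lambda}$. Your extra observation that the marginal constraints make each assignment matrix a genuine probability distribution (total mass one) is a detail the paper leaves implicit, but it does not change the argument.
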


\begin{proof}
    Recall that $\Omega$ is the set of global cluster-nodes, and $\Gamma_i$ refers to the set of local cluster-nodes associated with a node $i$. $\mathcal{V}$ is a direct copy of the nodes $V$ from the original graph. $P^{\Omega} \in \mathbb{R}^{|\mathcal{V}| \times |\Omega|}$ and $P^{\Gamma_i} \in \mathbb{R}^{|\mathcal{N}_i^+| \times |\Gamma_i|}$ are the global and local soft cluster assignment matrices respectively. $\mathcal{N}_i^+$ refers to the ego neighborhood of node $i$.

    Let $p(P^{\Omega})$ be the probability of assignment matrix $P^{\Omega}$ and $h(P^{\Omega})$ be its entropy, we can upper bound its entropy by:
    \begin{align}
        h(P^{\Omega}) = {} & \mathbb{E}\left[\log \frac{1}{p(P^{\Omega})}\right] \notag \\
        \leq {} & \log \mathbb{E}\left[\frac{1}{p(P^{\Omega})}\right] \tag{by Jensen's inequality} \\
        = {} & \log \sum_{i \in \mathcal{V}, j \in \Omega} p(P^{\Omega}_{ij})\frac{1}{p(P_{ij}^{\Omega})} \\
        = {} & \log \big(|\mathcal{V}| \times |\Omega|\big). \label{app:eq1}
    \end{align}
    
    The inequality is due to uniform distribution having the maximum entropy. 
    
    Similarly, for local assignment matrices $P^{\Gamma_i}$, we have:
    
    \begin{equation}\label{app:eq:eq2}
        h(P^{\Gamma_i}) \leq \log \big(|\mathcal{N}_i^+| \times |\Gamma_i|\big).
    \end{equation}

    Note that distance $d(\cdot, \cdot) \geq 0$, with the above results we obtain:
    \begin{align}
        \mathcal{L}_{\rm cluster}^{\lambda} = {} & \alpha\left(\sum_{i \in \mathcal{V}} \sum_{j \in \Omega} P_{ij}^{\Omega} d(z_i,c_j^\Omega) - \frac{1}{\lambda}h(P^{\Omega})\right) + \beta\sum_{i \in \mathcal{V}} d(z_i, x_i) \\
        & + (1 - \alpha)\sum_{i \in \mathcal{V}}\left(\sum_{k \in \mathcal{N}_i^+} \sum_{j \in \Gamma_i} P_{kj}^{\Gamma_i} d(z_k, c_j^{\Gamma_i}) - \frac{1}{\lambda}h(P^{\Gamma_i})\right) \\
        \geq {} & -\frac{\alpha}{\lambda}h(P^{\Omega}) - \frac{(1-\alpha)}{\lambda}\sum_{i \in \mathcal{V}}h(P^{\Gamma_i}) \tag{by $d(\cdot, \cdot) \geq 0$} \\
        \geq {} & -\frac{\alpha}{\lambda}\log \big(|\mathcal{V}|\times |\Omega|\big) - \frac{(1-\alpha)}{\lambda}\sum_{i \in \mathcal{V}}\log \big(|\mathcal{N}_i^+|\times |\Gamma_i|\big) \tag{by \eqref{app:eq1} and \eqref{app:eq:eq2}} \\
        = {} & \frac{\alpha}{\lambda}\log\frac{1}{|\mathcal{V}|\times |\Omega|} + \frac{(1-\alpha)}{\lambda}\sum_{i \in \mathcal{V}}\log\frac{1}{|\mathcal{N}_i^+|\times |\Gamma_i|},
    \end{align}
    
    which concludes the proof.
\end{proof}


    By Lemma~\ref{app:thrm:lem1}, there exists a lower bound of $\mathcal{L}^{\lambda}_{cluster}$.
    Therefore, to prove the convergence of our algorithm, we only need to show that the loss function is guaranteed to decrease monotonically in each iteration until convergence for the assignment update step and for the embeddings update step.
    
    For the assignment update step, let $P^{\Omega}$ be the current assignment from the previous iteration and $P^{\Omega*}$ be the new assignment obtained as 
    \begin{equation*}
        P^{\Omega*} \in \argmin_{P^{\Omega} \in U(\mathbf{u},\mathbf{v})} \langle P^{\Omega*}, M \rangle - \frac{1}{\lambda}h(P^{\Omega*}). 
    \end{equation*}
    
    The change in the loss function after this assignment step is then given by 
    \begin{equation*}
        \mathcal{L}_{\rm cluster}^{\lambda}(P^{\Omega*}) - \mathcal{L}_{\rm cluster}^{\lambda}(P^{\Omega}) \leq 0,
    \end{equation*}
    where the inequality holds by the convergence of Sinkhorn--Knopp algorithm~\citep{sinkhorn1967concerning, sinkhorn1967diagonal,franklin1989scaling, soules1991rate, chakrabarty2021better}. Similarly, we have
    \begin{equation*}
        \mathcal{L}_{\rm cluster}^{\lambda}(P^{\Gamma_i*}) - \mathcal{L}_{\rm cluster}^{\lambda}(P^{\Gamma_i} )\leq 0.
    \end{equation*}
    
    For the embeddings update step, let $Z$ and $C$ be the current embeddings from the previous iteration, and $Z^*$ and $C^*$ be the new embeddings obtained by Eq.~(\ref{eq:cluster_update_msg}) and Eq.~(\ref{eq:node_update_msg}). Let the $d(\cdot, \cdot)$ be the squared Euclidean distance, then since $P$ is a positive constant in this step, the loss function is convex. Since Eq.~(\ref{eq:cluster_update_msg}) and Eq.~(\ref{eq:node_update_msg}) are closed form solutions to the loss function, we have 
    \begin{equation*}
        \mathcal{L}_{\rm cluster}^{\lambda}(Z^*, C^*) - \mathcal{L}_{\rm cluster}^{\lambda}(Z, C) \leq 0.
    \end{equation*}
    
    Since $\mathcal{L}^{\lambda}_{\rm cluster}$ has a lower bound and it decreases monotonically in each iteration, the value of $\mathcal{L}^{\lambda}_{\rm cluster}$ produced by the message passing algorithm is guaranteed to converge.

\subsection{Broadcasted assignment matrices}
\label{app:sec:broadcast}
Let $f \colon \mathcal{Z} \times \mathcal{Z} \to \mathcal{Z}$ be a mapping from the index $k$ of a node in the ego-neighborhood of the node $u$ to the index $i$ of the same node in the node set  $\mathcal{V}$. 
Let $P^{\Gamma_u} \in \mathbb{R}_+^{|\mathcal{N}_u^+| \times |\Gamma_u|}$ be the local assignment matrix for the ego-neighborhood of node $u \in \mathcal{V}$.
For any $u \in \mathcal{V}$, we define the broadcasted local assignment matrix $\hat{P}^{\Gamma_u} \in \mathbb{R}_+^{|\mathcal{V}| \times |\Gamma_u|}$ as
\begin{equation}
\label{app:eq:broadcast_P}
\hat{P}^{\Gamma_u}_{ij} = 
\begin{cases} 
P^{\Gamma_u}_{kj}, & \text{if } i = f(u, k)\\
0, & \text{otherwise } 
\end{cases}.
\end{equation}

Then, we can define the the overall assignment matrix $P \in \mathbb{R}_+^{|\mathcal{V}| \times |\mathcal{C}|}$, where $ |\mathcal{C}| = |\Omega| + |\Gamma|$ and $|\Gamma| = \sum_{i \in \mathcal{V}} |\Gamma_i| $,  as

\begin{equation}
\label{app:eq:overall_P}
P = \left[ P^\Omega \quad \hat{P}^{\Gamma_1} \quad \cdots \quad \hat{P}^{\Gamma_{|\mathcal{V}|}} \right].
\end{equation}

Note that 
each element $P_{ij}$ can be viewed as the edge weights of a node $i \in \mathcal{V}$ and a specific cluster-node~$j \in \mathcal{C}$. In simple words, P includes all the global and local cluster assignment matrices, collated in a single matrix. This allows us to unify the message passing update in Eq.(~\ref{eq:node_update_msg}) for both local and global clustering terms.

\subsection{Update for cluster-node embeddings $C$: derivation of Eq.~(\ref{eq:cluster_update_msg})}
\label{app:sec:derivation_c}
Without loss of generality, we provide the full derivation for global cluster-node embeddings update function. 
With squared Euclidean norm as the distance function, \emph{i.e.}, $d (u, v) = \|u - v\|^2$, we can derive that

\begin{equation*}
    \frac{\partial \mathcal{L}_{\rm cluster}^\lambda}{\partial c^\Omega_{j}} =\frac{\partial }{\partial c^\Omega_{j}} \sum_{i \in \mathcal{V}}P^\Omega_{ij} \| z_i - c_j^\Omega \|^2 = 0
\end{equation*}

Then we have
\begin{equation*}
    2\sum_{i \in \mathcal{V}} P^\Omega_{ij}(c_j^\Omega - z_i) = 0 
\end{equation*}

By rearranging the terms

\begin{equation*}
    \sum_{i \in \mathcal{V}} P^\Omega_{ij} c_j^\Omega = \sum_{i \in \mathcal{V}} P^\Omega_{ij} z_i
\end{equation*}

Then one has
\begin{equation*}
    c_j^\Omega = \frac{\sum_{i \in \mathcal{V}} P^\Omega_{ij} z_i}{\sum_{i \in \mathcal{V}} P^\Omega_{ij} }
\end{equation*}


Since $P^\Omega \in U(\mathbf{u},\mathbf{v})$, we have $\sum_{i} P^\Omega_{ij} = \frac{1}{|\Omega|}$ where $|\Omega|$ is the number of global clusters. Therefore,

\begin{equation}
 c_j^\Omega  = |\Omega|\sum_{i \in \mathcal{V}} P^\Omega_{ij} z_i
 \label{app:eq:global}
\end{equation}

Similarly, with $|\Gamma_i|$ denoting the number of local clusters within the ego-neighborhood of node $i$, we have
\begin{equation}
 c_j^{\Gamma_i}  = |\Gamma_i|\sum_{u \in \mathcal{N}_i^+} P^{\Gamma_i}_{ij} z_u
 \label{app:eq:local}
\end{equation}

Let  $\mathbf{k^\top} = \begin{bmatrix} |\Omega| \mathbf{1}_{|\Omega|}; |\Gamma_i| \mathbf{1}_{|\Gamma_i|};\cdots; |\Gamma_{|\mathcal{V}|}| \mathbf{1}_{|\Gamma_i|}
  \end{bmatrix}_{1 \times |\mathcal{C}| }$, where $;$ denotes concatenation.
Then we can combine  Eq.~(\ref{app:eq:global}) and Eq.~(\ref{app:eq:local}) together in one matrix equation,
\begin{equation}
C = \text{diag}(\mathbf{k})P^\top Z,
\end{equation}
 where the overall assignment matrix $P$ is defined in Eq.~(\ref{app:eq:overall_P}),

\subsection{Update for node embeddings $Z$: derivation of Eq.~(\ref{eq:node_update_msg})}
\label{app:sec:derivation}
We provide the full derivation of the node embeddings update function. This manifests as message passing from cluster-nodes to nodes. With squared Euclidean norm as the distance function, \emph{i.e.}, $d (u, v) = \|u - v\|^2$, we can derive that

\begin{equation}
\frac{\partial \mathcal{L}_{\rm cluster}^\lambda}{\partial z_{i}} =  \alpha \sum_{j \in \Omega} 2P_{ij}^{\Omega}(z_i - c_j) + 2\beta(z_i - x_i)  + (1-\alpha) \sum_{u \in \mathcal{N}_i^+} \sum_{j \in \Gamma_u}2P_{ij}^{\Gamma_u}(z_i - c_j) = 0 
\label{eq:derivative_eq}
\end{equation}
From Eq.~(\ref{eq:derivative_eq}), we obtain

\begin{equation*}
     \alpha \sum_{j \in \Omega} P_{ij}^{\Omega}z_i - \alpha \sum_{j \in \Omega} P_{ij}^{\Omega}c_j + \beta z_i - \beta x_i  
+ (1 - \alpha) \sum_{u \in \mathcal{N}_i^+}  \sum_{j \in \Gamma_u} P_{ij}^{\Gamma_u} z_i  - (1 - \alpha) \sum_{u \in \mathcal{N}_i^+} \sum_{j \in \Gamma_u} P_{ij}^{\Gamma_u} c_j = 0 
\end{equation*}

By rearranging the terms, we have
\begin{equation*}
(\alpha \sum_{j \in \Omega} P_{ij}^{\Omega} + \beta +  (1 - \alpha) \sum_{u \in \mathcal{N}_i^+} \sum_{j \in \Gamma_u} P_{ij}^{\Gamma_u}) z_i = \alpha \sum_{j \in \Omega} P_{ij}^{\Omega}c_j + \beta x_i + (1 - \alpha) \sum_{u \in \mathcal{N}_i^+}\sum_{j \in \Gamma_u} P_{ij}^{\Gamma_u} c_j 
\end{equation*}

Since $P^\Omega \in U(\mathbf{u},\mathbf{v})$, we have $\sum_{j} P^\Omega_{ij} = \frac{1}{\numnodes}$. Similarly, $\sum_{j \in \Gamma_u} P_{ij}^{\Gamma_u} =  \frac{1}{|\mathcal{N}_i^+|}$. Therefore, we can deduce that
\begin{equation*}
(\frac{\alpha}{\numnodes} + \beta + (1-\alpha) \sum_{u \in \mathcal{N}_i^+} \frac{1}{|\mathcal{N}_i^+|}) z_i  = \alpha \sum_{j \in \Omega} P_{ij}^{\Omega}c_j + \beta x_i + (1 - \alpha)  \sum_{u \in \mathcal{N}_i^+} \sum_{j \in \Gamma_u} P_{ij}^{\Gamma_u} c_j 
\end{equation*}

With $ \sum_{u \in \mathcal{N}_i^+} \frac{1}{|\mathcal{N}_i^+|} = 1$, we have
\begin{equation*}
(\frac{\alpha}{\numnodes} + \beta + 1-\alpha) z_i  = \alpha \sum_{j \in \Omega} P_{ij}^{\Omega}c_j + \beta x_i + (1 - \alpha)  \sum_{u \in \mathcal{N}_i^+} \sum_{j \in \Gamma_u} P_{ij}^{\Gamma_u} c_j 
\end{equation*}


This leads to
\begin{equation*}
z_i = \frac{1}{\frac{\alpha}{\numnodes} + \beta + 1-\alpha} [\alpha \sum_{j \in \Omega} P_{ij}^{\Omega}c_j + \beta x_i  + (1 - \alpha) \sum_{u \in \mathcal{N}_i^+}\sum_{j \in \Gamma_u} P_{ij}^{\Gamma_u} c_j] 
\end{equation*}

Finally, expressing in matrix form admits the following closed-form solution
\begin{equation*}
Z = \frac{1}{\frac{\alpha}{\numnodes} + \beta + 1-\alpha} [\alpha P^{\Omega}C^{\Omega} + \beta X  + (1 - \alpha) \sum_{u \in \mathcal{N}_i^+}\hat{P}^{\Gamma_u}C^{\Gamma_u}]
\end{equation*}
where $\hat{P}^{\Gamma_u} \in \mathbb{R}_+^{|\mathcal{V}| \times |\Gamma_u|}$ is the broadcasted local cluster assignment matrix from $\hat{P}^{\Gamma_u} \in \mathbb{R}_+^{|\mathcal{N}_u^+| \times |\Gamma_u|}$ as defined in Eq.(\ref{app:eq:broadcast_P}).

\section{Illustration of bipartite graph formulation}
\label{app:sec:bipartite}

We construct a bipartite graph, denoted as \( \mathcal{G} = (\mathcal{V}, \mathcal{C}, \mathcal{E}) \). The graph is derived from the original graph \( G = (V, E) \) and comprises two distinct sets of nodes. The first set, \( \mathcal{V} \), is a direct copy of the nodes \( V \) from the original graph. The second set, \( \mathcal{C} \), consists of cluster nodes divided into two categories: global clusters (\( \Omega \)) and local clusters (\( \Gamma \)).

In this bipartite graph, each node from the global clusters \( \Omega \) connects to all nodes in \( \mathcal{V} \). Meanwhile, each node from the local clusters \( \Gamma \) is associated with a specific node \( i \) in \( \mathcal{V} \), and connected to its ego-neighborhood, which includes node \( i \) and its one-hop neighbors. For a node \( i \) in \( \mathcal{V} \), \( \Gamma_i \) represents the set of local clusters associated with it. The total number of nodes in \( \mathcal{C} \) is the sum of nodes in \( \Omega \) and the nodes in all local clusters \emph{i.e.}, \( |\mathcal{C}| = |\Omega| + \sum_{i \in \mathcal{V}} |\Gamma_i| \). An illustration of the bipartite graph is provided in Fig.~\ref{fig:bipartite}.

\begin{figure}[htbp]
    \centering
    \includegraphics[width=0.8\textwidth]{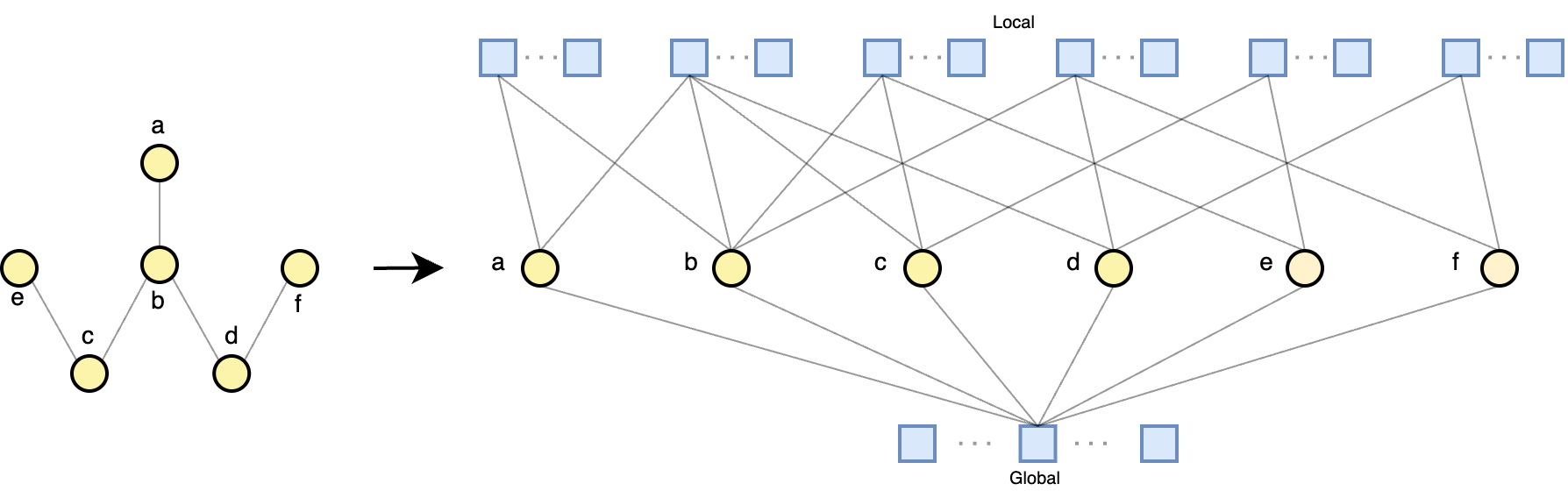} 
    \caption{Based on the original graph on the left, we construct a bipartite graph on the right by adding local and global cluster-nodes. For each node in the original graph, a set of local cluster-nodes, represented by the blue boxes at the top, is connected to its ego-neighborhood. For example, the ego-neighborhood of node \texttt{a} includes itself and its one-hop neighbor node b. Therefore the local cluster-nodes for node a are connected to \texttt{a} and \texttt{b}. Meanwhile, a set of global cluster-nodes are added and connected to all nodes in the original graph, as represented by the blue boxes at the bottom.}
    \label{fig:bipartite} 
\end{figure}

\section{Experiments}

\subsection{Effective resistance on random graphs}
\begin{figure}[htbp]
    \centering
    \begin{tabular}{c}
    \resizebox{\columnwidth}{!}{
    \subfloat[{edge probability = 0.2}]{
        \includegraphics[width=0.25\textwidth]{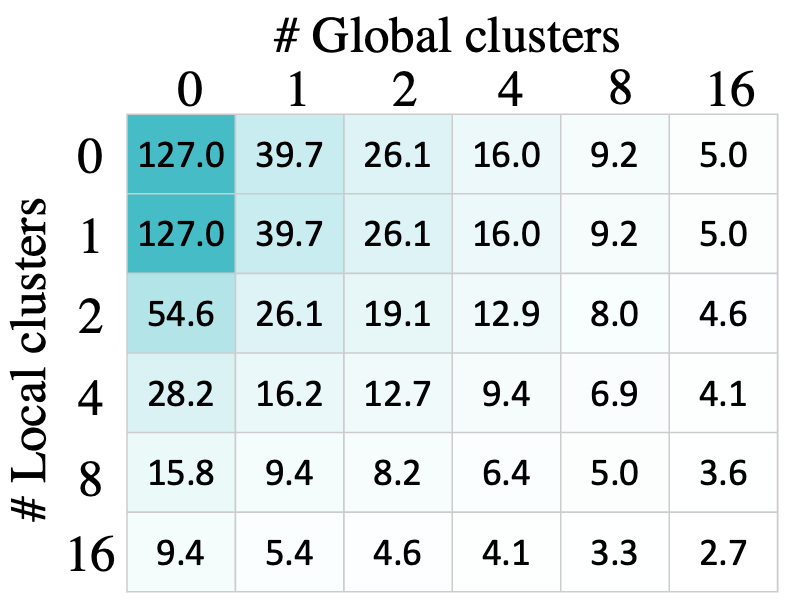}
    }\quad
    \subfloat[{edge probability = 0.3}]{
        \includegraphics[width=0.25\textwidth]{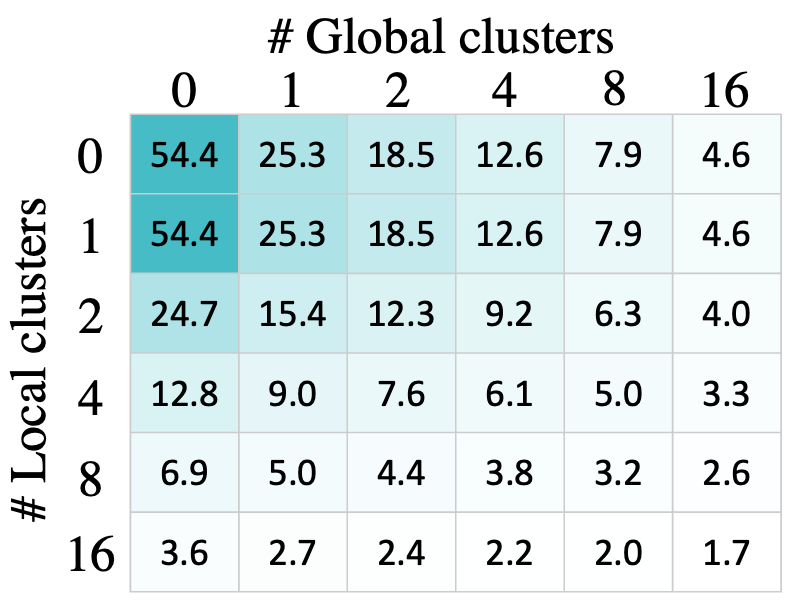}
    }\quad
        \subfloat[{edge probability = 0.5}]{
        \includegraphics[width=0.25\textwidth]{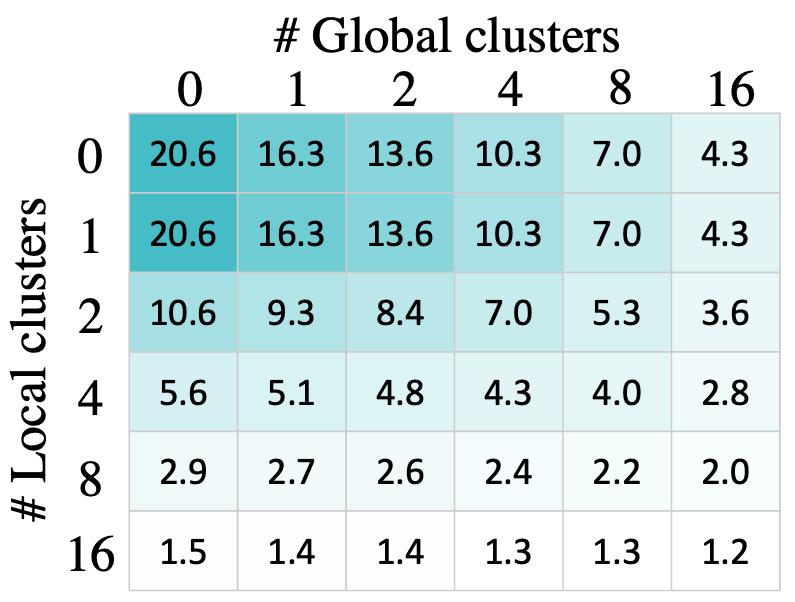}
    }\quad
    \subfloat[{edge probability = 0.8}]{
        \includegraphics[width=0.25\textwidth]{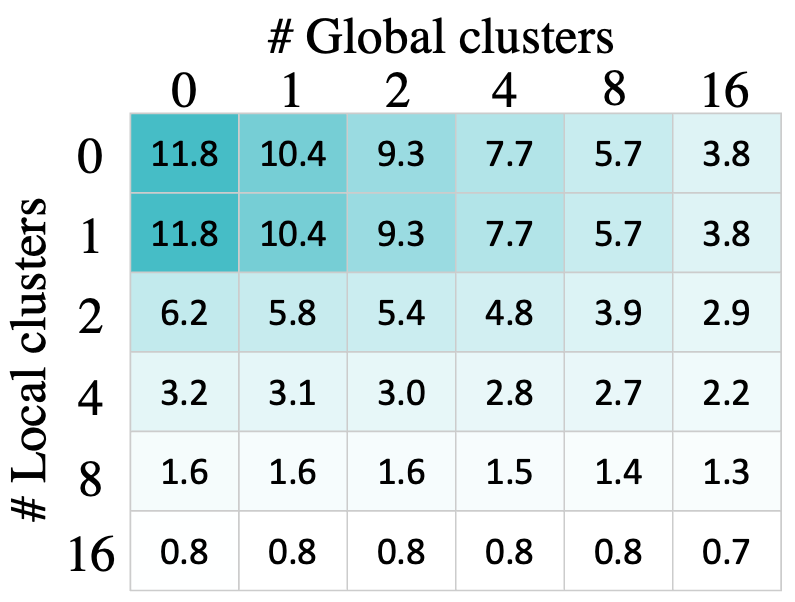}
    }
    }
     \end{tabular} 
     \caption{Total effective resistance heatmap of Erdos-Renyi random graphs at different sparsity levels. Number of nodes is 10 for all settings. }
     \label{fig:effective_resistance_synthetic}
\end{figure}
\label{app:sec:er}
Let $u$ and $v$ be vertices of $G$. The effective resistance between $u$ and $v$ is defined as
$$
R_{u,v} = (1_u-1_v)^{T}L^{+}(1_u-1_v),
$$
where $1_v$ is the indicator vector of the vertex $v$~\citep{black2023understanding}. Let $A$ be the adjacency matrix and $D$ be the degree matrix. The Laplacian is $L=D-A$ and $L^{+}$ is the pseudoinverse of $L$. 
The total effective resistance ($R_{\text{tot}}$) of a graph is therefore the total sum of effective resistance between every pair of nodes.

We measure effective resistance ($R_{\text{tot}}$) in synthetic random graphs with different degrees of sparsity. Results in Fig.~\ref{fig:effective_resistance_synthetic} show that both global and local cluster-nodes contribute to reducing effective resistance, as demonstrated by decreasing $R_{\text{tot}}$ values in both row and column directions. Additionally, the more drastic $R_{\text{tot}}$ decrease from the first to last column in heatmap \texttt{(a)} compared to heatmap \texttt{(d)} show that global cluster-nodes play a more pronounced role in reducing effective resistance at a higher edge sparsity setting.

\subsection{More ablation studies}
\label{app:sec:ablation}

\subsubsection{Aggregation operation in similarity loss}
\label{label:ablation_max}

                
\begin{table}[htbp]
    \centering
    \caption{Effects of aggregator function in $\mathcal{L}_{\rm sim}$.}
    \label{tab:ablation_aggregator}
    \vspace{10pt}
   \resizebox{0.35\linewidth}{!}{%
    \begin{tabular}{lcc}
    \toprule
    \textbf{AGG} & \textbf{Penn94} & \textbf{Amherst41} \\ 
    \midrule
    MEAN & 86.13 \gray{(0.12)} & 81.23 \gray{(1.34)} \\
    SUM & 85.84 \gray{(0.26)} & 81.26 \gray{(1.58)} \\
    \midrule
    MAX & 86.69 \gray{(0.22)} & 82.94 \gray{(1.59)} \\
    \bottomrule
    \end{tabular}
  }
\end{table}

 After computing the similarity between a node and each of the multiple clusters from the same class, the choice of aggregation method is crucial. We evaluate the effectiveness of using the aggregation operator on Amherst41 and Penn94 datasets. Tab.~\ref{tab:ablation_aggregator} shows the effects of replacing the $\max$ aggregator with $\mathrm{mean}$ and $\mathrm{sum}$ in computing similarity loss. On both datasets, $\max$ outperforms both $\mathrm{sum}$ and $\mathrm{mean}$, indicating the effectiveness of using $\mathrm{max}$ as the aggregation operation.
Intuitively, taking the average of all similarity scores ($\mathrm{mean}$) is sub-optimal. $\mathrm{mean}$ tends to make the node embeddings closer to the average of all clusters belonging to a same class, undermining the purpose of using multiple clusters. Similar to $\mathrm{mean}$, summing up all similarity scores ($\mathrm{sum}$) is more powerful yet requires more data to learn. $\mathrm{max}$ selects the maximum similarity score to compute similarity loss and guides the node embeddings closer to one of the clusters, thus preserving the power of diversity in representation.

\subsubsection{Additional ablation on the orthogonality loss and similarity loss }

\begin{table*}[ht]
    \centering
    \caption{Effects of Similarity and Orthogonality losses for datasets in Tab.~\ref{tab:combined_results}.}
    \label{tab:ablation_losses_1}
    \vspace{5pt}
    \resizebox{\textwidth}{!}{
    \begin{tabular}{lccccccccc}
    \toprule
     & \textbf{Penn94} & \textbf{Genius} & \textbf{Cornell5} & \textbf{Amherst41} & \textbf{US-election} & \textbf{Wisconsin} & \textbf{Cora} & \textbf{Citeseer} & \textbf{Pubmed} \\
    \midrule
    DC-GNN & 86.69 \textcolor{gray}{(0.22)} & 91.70 \textcolor{gray}{(0.08)} & 84.68 \textcolor{gray}{(0.24)} & 82.94 \textcolor{gray}{(1.59)} & 89.59 \textcolor{gray}{(1.60)} & 91.67 \textcolor{gray}{(1.95)} & 89.13 \textcolor{gray}{(1.18)} & 77.93 \textcolor{gray}{(1.82)} & 91.00 \textcolor{gray}{(1.28)} \\
    (-)$\mathcal{L}_{sim}$, $\mathcal{L}_{ortho}$ & 86.40 \textcolor{gray}{(0.25)} & 91.68 \textcolor{gray}{(0.08)} & 84.34 \textcolor{gray}{(0.22)} & 81.75 \textcolor{gray}{(1.07)} & 88.61 \textcolor{gray}{(1.57)} & 89.06 \textcolor{gray}{(2.55)} & 88.87 \textcolor{gray}{(1.23)} & 77.27 \textcolor{gray}{(1.86)} & 91.00 \textcolor{gray}{(1.61)} \\
    \bottomrule
    \end{tabular}
    }
\end{table*}
\begin{table*}[htbp]
    \centering
    \caption{Effects of Similarity and Orthogonality losses on recent heterophilous datasets \citep{platonov2023critical}.}
    \label{tab:ablation_losses_2}
    \vspace{5pt}
    \resizebox{0.7\textwidth}{!}{
    \begin{tabular}{lccccc}
    \toprule
        & \textbf{Roman-empire} & \textbf{Amazon-ratings} & \textbf{Minesweeper} & \textbf{Tolokers} & \textbf{Questions} \\
    \midrule
    DC-GNN & 89.96 \gray{(0.35)} & 51.11 \gray{(0.47)} & 98.50 \gray{(0.21)} & 85.88 \gray{(0.81)} & 78.96 \gray{(0.60)} \\
    (-)$\mathcal{L}_{\rm sim}$, $\mathcal{L}_{\rm ortho}$ & 89.44 \gray{(0.72)} & 50.32 \gray{(0.46)} & 98.04 \gray{(0.19)} & 84.94 \gray{(0.59)} & 77.30 \gray{(0.98)} \\
    \bottomrule
    \end{tabular}
    }
\end{table*}

We conduct ablation studies on all fourteen datasets we have used. As shown in Tab.~\ref{tab:ablation_losses_1} and Tab.~\ref{tab:ablation_losses_2}, $\mathcal{L}_{\rm ortho}$ and $\mathcal{L}_{\rm sim}$ generally help to improve the scores across datasets by facilitating the clustering process. 

\newpage
\subsubsection{Effects of node fidelity term}
\begin{table}[htbp]
    \centering
    \caption{Dirichlet Energy (DE) with different $\beta$ values. Higher DE indicates increased node distinctiveness.} 
    \label{tab:oversmoothing}
    \vspace{5pt}
    \resizebox{0.45\textwidth}{!}{
    \begin{tabular}{lccc}
    \toprule
    & $\beta=0.0$ & $\beta=0.5$ & $\beta=1.0$ \\
    \midrule
    Wisconsin & 0.741288 & 0.963261 & 0.978465 \\
    Citeseer & 0.110083 & 0.151699 & 0.206022 \\
    Cora & 0.218658 & 0.294024 & 0.330234 \\
    \bottomrule
    \end{tabular}
    }
\end{table}
The node fidelity term encourages the node embeddings to retain some information from the original node features, which serve as initial residual. This technique can also potentially help to alleviate oversmoothing as shown in~\citet{klicpera2018predict, chen2020simple}. 
To validate this, we conduct additional experiments to measure the normalized Dirichlet Energy (DE)~\citep{karhadkar2022fosr} for \model~on Wisconsin, Cora and Citeseer, using the implementation from ~\citep{karhadkar2022fosr}.

We set $\beta$ to 0, 0.5 and 1 for each dataset to measure how increased weightage of the node fidelity term influences DE, while keeping $\alpha$  constant at 0.5. As observed in Tab.~\ref{tab:oversmoothing}, DE positively correlates with $\beta$ on all datasets. 

\subsection{Runtime experiments}
\label{app:runtime}

We measure the average runtime of a \msgPassing~layer on the three largest datasets used in our experiments against Pytorch Geometric~\citep{fey2019fast} implementation of GATConv and GCNConv. \msgPassing~takes less than 4x times GCN and is faster than GAT on these datasets. The results show that \msgPassing~is competitive in terms of runtime, confirming our complexity analysis.

\begin{table}[htbp]
    \centering
    \caption{Average runtime and dataset statistics comparison on three large-scale datasets. Runtime results are in seconds.}
    \label{tab:runtime_comparison}
    \vspace{5pt}
    \resizebox{0.5\textwidth}{!}{
    \begin{tabular}{lccc}
    \toprule
    & \textbf{Penn94} & \textbf{Cornell5} & \textbf{Genius} \\
    \midrule
    \# Nodes & 41,554 & 18,660 & 421,961 \\
    \# Edges & 1,362,229 & 790,777 & 984,979 \\
    \midrule
    \msgPassing & 0.00852 & 0.00820 & 0.00708 \\
    GATConv & 0.01242 & 0.01685 & 0.01636 \\
    GCNConv & 0.00220 & 0.00242 & 0.00207 \\
    Multiples of GAT & 0.69x & 0.49x & 0.43x \\
    Multiples of GCN & 3.88x & 3.38x & 3.42x \\
    \bottomrule
    \end{tabular}
    }
\end{table}

\subsection{Dataset details}
\label{app:sec:dataset}
\subsubsection{Dataset description}
\textbf{Roman-empire, Amazon-ratings, Minesweeper, Tolokers and Questions} are five datasets proposed in~\citet{platonov2023critical} to better evaluate the performance of GNNs under heterophilous settings. The description of each dataset is as follows.

\textbf{Roman-empire} is based on the Roman Empire article from English Wikipedia. Each node in the graph represents one word in the text, and each edge between two words represents either one word following another word or if the two words are connected in the dependency tree. Node features is its FastText word embeddings. The task is to predict a node's syntactic role.

\textbf{Amazon-ratings} is based on the Amazon product co-purchasing network metadata. Nodes represent products and edges connect products frequently purchased together. Node features are the mean of FastText embeddings for words in product description. The task is to predict the class of products' ratings.

\textbf{Minesweeper} is a synthetic dataset inspired by the Minesweeper game. The graph is a regular 100x100 grid where each node is connected its eight neighboring nodes. 20\% of the nodes are randomly assgined as mines. The node features are one-hot-encoded numbers of the neighboring mines. The task is to predict if the nodes are mines.

\textbf{Tolokers} is based on data from the Toloka crowdsourcing platform. Nodes represent workers who have participated in the selected projects, while edges connect two workers who work on the same task. Node features are based on worker's profile information and task performance. The task is to predict which workers have been banned.

\textbf{Questions} is based on question-answering data from website Yandex Q. Nodes represent users and edges connect an answer provider to a question provider. Node features are the mean of FastText word embeddings of user profile description, with an additional binary feature indicating users with no descriptions. The task is to predict if the users remain active on the website. 

\textbf{Penn94, Cornell5 and Amherst41}~\citep{lim2021large} are friendship network datasets extracted from Facebook of students from selected universities from 2005. Each node in the datasets represent a student, while node label represents the reported gender of the student. Node features include major, second major/minor, dorm/house, year, and high school.

\textbf{Wisconsin}~\citep{pei2020geom} is a web page dataset collected 
from the computer science department of Wisconsin Madison. In this dataset, nodes represent web pages and edges are hyperlinks between them. Feature vectors of nodes are bag-of-words representations. The task is to classify the web pages into one of the five categories including student, project, course, staff and faculty.

\textbf{Genius}~\citep{lim2021large} is a sub-network from website genius.com, a crowd-sourced website of song lyrics annotations. Nodes represent users while edges connect users that follow each other. Node features include expertise scores expertise scores, counts of contributions and  roles held by users. Around 20\% of the users are marked with a "gone" label, indicating that they are more likely to be spam users. The task is to predict which users are marked. 

\textbf{US-election}~\citep{jia2020residual} is a geographical dataset extracted from statistics of Unite States election of year 2012. Nodes represent US counties, while edges connect bordering counties. Node features include income, education, population etc. The task is a binary classification to predict election outcome.

\textbf{Cora, Citeseer and Pubmed}~\citep{pei2020geom} are citation graphs, where each node represents a scientific paper and two papers are connected when a paper cites the other. Each node is labeled with the research field and the task is to predict which field the paper belongs to. All three datasets are homophilous.

\subsubsection{Dataset statistics}
Tab.~\ref{tab:dataset_statistics_1} covers statistics of datasets in Tab.~\ref{tab:combined_results}. Tab.~\ref{tab:dataset_statistics_2} covers statistics of datasets in Tab.~\ref{tab:results}.
\begin{table*}[htbp]
\centering
\caption{Dataset statistics for Tab.~\ref{tab:combined_results}.}
\label{tab:dataset_statistics_1}
\resizebox{1\textwidth}{!}{ 
\begin{tabular}{lccccccccc}
\toprule
                    & \textbf{Penn94} & \textbf{Cornell5} & \textbf{Amherst41} & \textbf{Genius} & \textbf{US-election} & \textbf{Wisconsin} & \textbf{Cora} & \textbf{Citeseer} & \textbf{Pubmed} \\
\midrule
\textbf{Edge Hom.}  & 0.47 & 0.47 & 0.46 & 0.61 & 0.83 & 0.21 & 0.81 & 0.74 & 0.80 \\
\textbf{Improved Edge Hom.}~\citep{lim2021large}  & 0.046 & 0.09 & 0.05 & 0.08 & 0.54 & 0.094 & 0.766 & 0.627 & 0.664 \\
\textbf{\# Nodes}    & 41,554 & 18,660 & 2,235 & 421,961 & 3,234 & 251 & 2,708 & 3,327 & 19,717 \\
\textbf{\# Edges}    & 1,362,229 & 790,777 & 90,954 & 984,979 & 11,100 & 466 & 5,278 & 4,676 & 44,327 \\
\textbf{\# Node Features} & 4814 & 4735 & 1193 & 12 & 6 & 1,703 & 1,433 & 3,703 & 500 \\
\textbf{\# Classes}  & 2 & 2 & 2 & 2 & 2 & 5 & 6 & 7 & 3 \\
\bottomrule
\end{tabular}
}
\end{table*}

\begin{table}[htbp]
\centering
\caption{Dataset statistics for Tab.~\ref{tab:results}.}
\label{tab:dataset_statistics_2}
\resizebox{0.8\textwidth}{!}{
\begin{tabular}{lccccc}
\toprule
                & \textbf{Roman-Empire} & \textbf{Amazon-Ratings} & \textbf{Minesweeper} & \textbf{Tolokers} & \textbf{Questions} \\
\midrule
\textbf{Edge Hom.}  & 0.05 & 0.38 & 0.68 & 0.59 & 0.84 \\
\textbf{Improved Edge Hom.}~\citep{lim2021large}  & 0.01 & 0.12 & 0.009 & 0.17 & 0.08 \\
\textbf{\# Nodes}       & 22,662           & 24,492              & 10,000           & 11,758         & 48,921          \\
\textbf{\# Edges}       & 32,927           & 93,050              & 39,402           & 519,000        & 153,540         \\
\textbf{\# Node Features} & 300             & 300                 & 7                & 10             & 301             \\
\textbf{\# Classes}     & 18               & 5                   & 2                & 2              & 2               \\
\bottomrule
\end{tabular}
}
\end{table}

\paragraph{Homophily matrix}
Homophily refers to the degree of similarity between connected neighboring nodes in terms of their features or labels. 
There are many types of homophily measures proposed, including edge homophily~\citep{zhu2020beyond}, node homophily~\citep{pei2020geom}, and improved edge homophily~\citep{lim2021large}. Homophily matrix proposed in \citet{lim2021large} is an important metric, since it can better reflect class-wise homophily. 
The homophiliy matrix is defined as:
\begin{equation}
    H_{c_1,c_2} = \frac{|(u,v) \in E : c_u =c_1, \; c_v = c_2| }{|(u,v) \in E : c_u = c_1 | },
\end{equation}
for classes $c_1$ and $c_2$, $H_{{c_1}{c_2}}$ denotes the proportion of edges between from nodes of class $c_1$ to nodes of class $c_2$. A homophilous graph has high values on the diagonal entries of $H$.

Fig.~\ref{app:fig:homophily_cora} are the homophily matrices for three well-known homophilous datasets: Cora, Citeseer and Pubmed~\citep{yang2016revisiting}. High homophily is signified by the high numbers in diagonal cells, whereas values of non-diagonal cells are mostly less than 0.1. This is different from the homophily matrices of heterophilous datasets, where values of non-diagonal cells are similar or even higher than diagonal cells.

\begin{figure}[htbp]
    \centering
    \begin{tabular}{c}
    \resizebox{0.9\columnwidth}{!}{
    \begin{subfigure}[b]{0.25\textwidth}
        \centering
        \includegraphics[width=\textwidth]{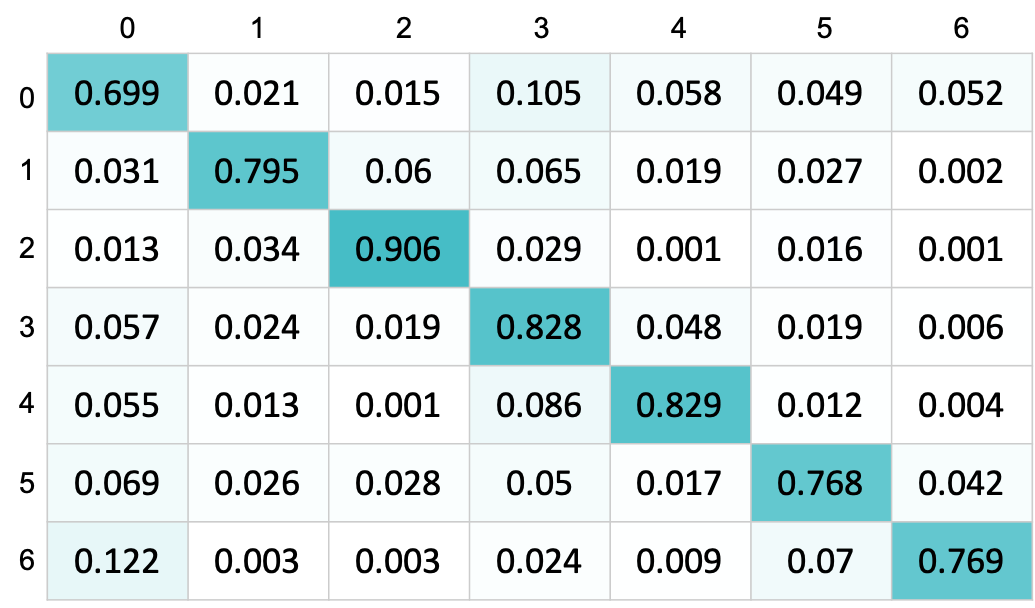}
        \captionsetup{font=small}
        \caption{Cora}
    \end{subfigure}\quad
    \begin{subfigure}[b]{0.25\textwidth}
        \centering
        \includegraphics[width=\textwidth]{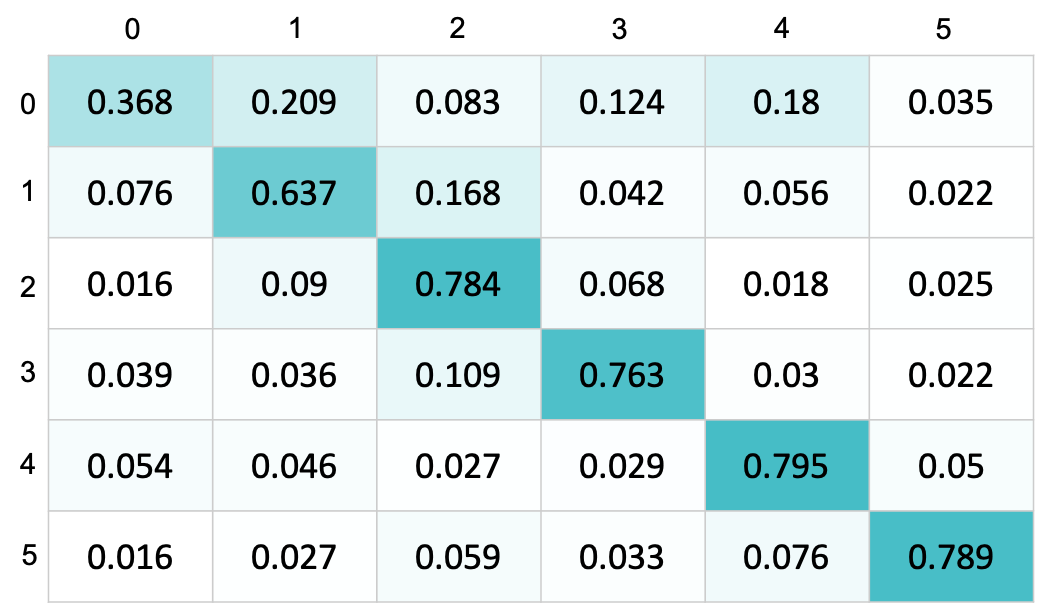}
        \captionsetup{font=small}
        \caption{Citeseer}
    \end{subfigure}\quad
    \begin{subfigure}[b]{0.25\textwidth}
        \centering
        \includegraphics[width=\textwidth]{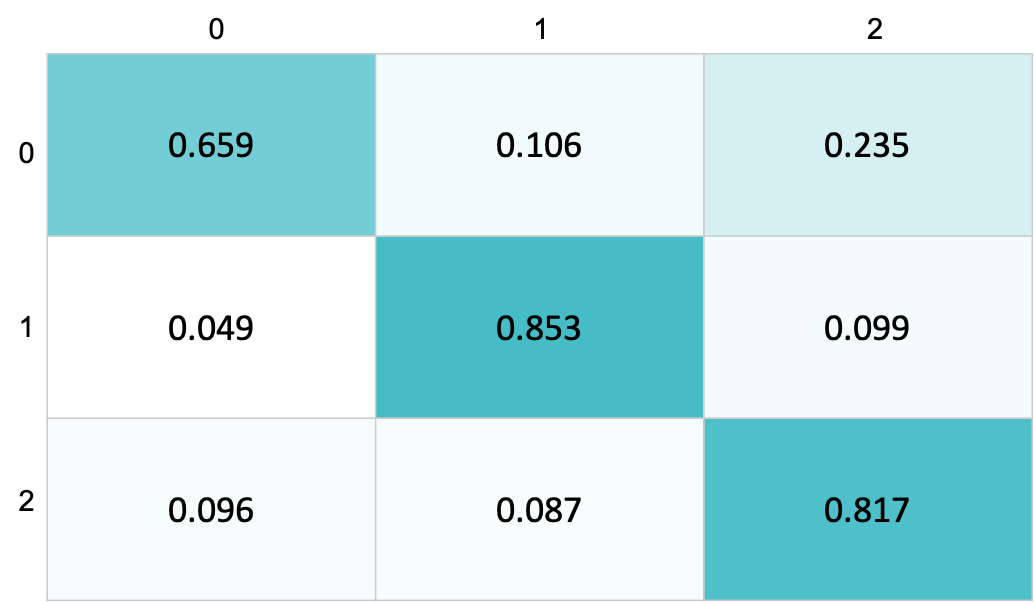}
        \captionsetup{font=small}
        \caption{Pubmed}
    \end{subfigure}
    }
    \end{tabular}
    \caption{\label{app:fig:homophily_cora}Homophily matrix for three homophilous datasets.}
\end{figure}

 We show in Fig.~\ref{app:fig:hetero} the homophily matrices for some heterophilous datasets for comparison.

 \begin{figure}[htbp]
    \centering
    \begin{tabular}{c}
    \resizebox{0.9\columnwidth}{!}{
    \begin{subfigure}[b]{0.25\textwidth}
        \centering
        \includegraphics[width=\textwidth]{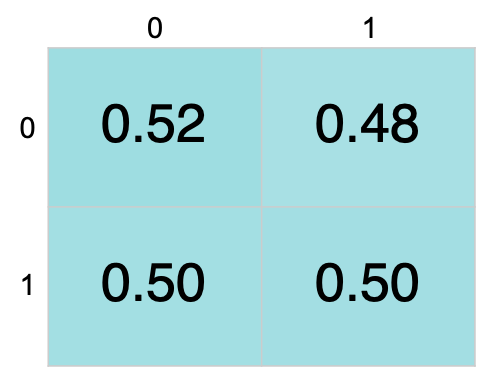}
        \captionsetup{font=Large} 
        \caption{Penn94}
    \end{subfigure}\quad
    \begin{subfigure}[b]{0.25\textwidth}
        \centering
        \includegraphics[width=\textwidth]{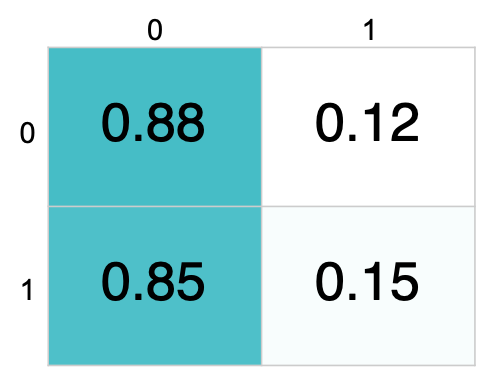}
\captionsetup{font=Large}
        \caption{Genius}
    \end{subfigure}\quad
    \begin{subfigure}[b]{0.25\textwidth}
        \centering
        \includegraphics[width=\textwidth]{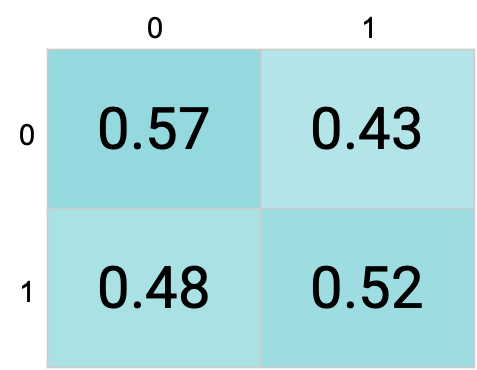}
        \captionsetup{font=Large} 
        \caption{Cornell5}
    \end{subfigure}\quad
    \begin{subfigure}[b]{0.25\textwidth}
        \centering
        \includegraphics[width=\textwidth]{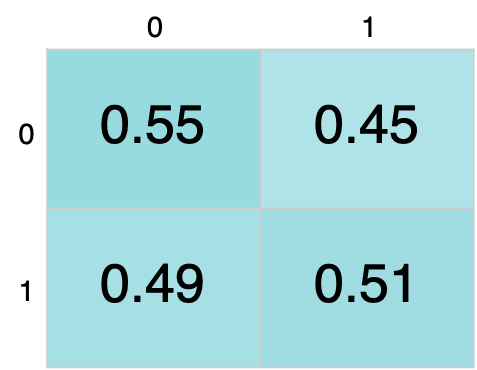}
        \captionsetup{font=Large}
        \caption{Amherst41}
    \end{subfigure}\quad
    \begin{subfigure}[b]{0.25\textwidth}
        \centering
        \includegraphics[width=\textwidth]{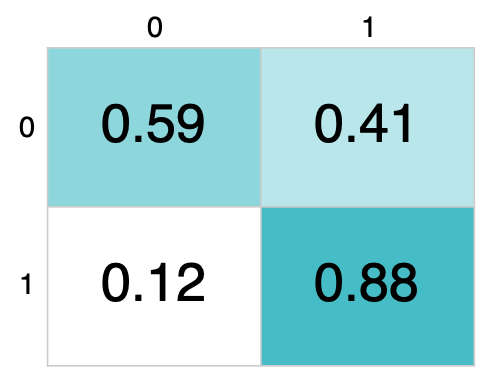}
        \captionsetup{font=Large} 
        \caption{US-election}
    \end{subfigure}\quad
    \begin{subfigure}[b]{0.34\textwidth}
        \centering
        \includegraphics[width=\textwidth]{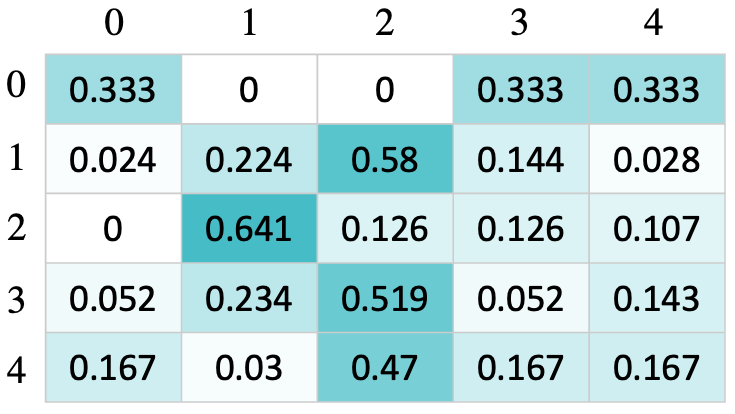}
        \captionsetup{font=Large}
        \caption{Wisconsin}
    \end{subfigure}
    }
    \end{tabular}
    \caption{\label{app:fig:hetero}Homophily matrix for heterophilous datasets.}
\end{figure}

\subsection{More experiment details}
\label{app:sec:implementation}
\paragraph{Baselines}
To comprehensively evaluate the effectiveness of our method, we compare it against various strong baselines following~\citet{platonov2023critical} and ~\citet{li2022finding}. This includes (1) graph-agnostic model MLP and ResNet~\citep{he2016deep}, with two modified versions ResNet+SGC~\citep{wu2019simplifying} and ResNet+adj~\citep{zhu2021graph};
(2) general-purpose GNN architectures: GCN\citep{kipf2016semi}, Graph-SAGE~\citep{hamilton2017inductive}, MixHop~\citep{abu2019mixhop} and GCNII~\citep{chen2020simple};
(3) GNN models that leverage attention-based aggregation: GAT~\citep{velivckovic2017graph} and Graph Transformer(GT)~\citep{shi2020masked}. Following \citet{platonov2023critical}, we also include two modified architectures GAT-sep and GT-sep, where ego- and neighbor-embeddings are aggregated separately. Following \citep{muller2023attending}, we also include GPS\textsuperscript{GAT+Performer} that achieves the best performance among GPS variants on most datasets as a baseline~\citep{muller2023attending, masters2022gps++,rampavsek2022recipe}.
(4) GNN models designed for heterophilous graphs: H$_2$GCN~\citep{zhu2020beyond}, CPGNN~\citep{zhu2021graph}, GPR-GNN~\citep{chien2020adaptive}, FSGNN~\citep{maurya2022simplifying},  FFAGCN~\citep{bo2021beyond}, GBK-GNN~\citep{du2022gbk}, Jacobi-Conv~\citep{wang2022powerful}, WRGAT~\citep{suresh2021breaking},
GPR-GNN~\citep{chien2020adaptive}, GGCN~\citep{yan2021two},
ACM-GCN~\citep{luan2021heterophily}, LINKX~\citep{zhu2021graph} and GloGNN/GloGNN++~\citep{li2022finding}.


\paragraph{Implementation details.}For global cluster-nodes, we use trainable lookup embeddings to initialize the embeddings. For local cluster-nodes, we fix the number of local clusters to be 2 for every ego-neighborhoods, and initialize the two cluster-node embeddings by the central node features and the average of neighboring node features respectively.
In practice, for local clustering cost matrices M, we rescale and normalize the distance before running the Sinkhorn-Knopp algorithm for numerical stability. We apply non-linear activation function tanh to the messages.

\paragraph{Training Settings}
\label{app:para:training_settings}
We conduct each experiment of \model~using three distinct data splits and present the corresponding mean and standard deviation of the performance metrics. The experiments are executed on a single GPU. The GPUs are from various types—specifically, the V100, A100, GeForce RTX 2080, or 3090—based on their availability at the time the experiments are conducted. For optimization, we employ the Adam optimizer and undertake a grid search of hyperparameters, the specifics of which are in Tab.~\ref{app:tab:hyper1} and Tab.~\ref{app:tab:hyper2}. Should the baseline results be publicly accessible, we directly incorporate them into our report. For the datasets where baseline results are missing (Cornell5, Amherst41, US-election), we reproduced them following the code in~\citet{li2022finding}.

\paragraph{Hyperparameters for \model}
For experiments in Tab.~\ref{tab:combined_results} and  Tab.~\ref{tab:results}, we fix some hyperparameters and perform grid search for other hyperparameters. To facilitate reproducibility, we document the details of the hyperparameters and search space in Tab.~\ref{app:tab:hyper1} and Tab.~\ref{app:tab:hyper2} respectively. $T^{\Omega}$ refer to the number of iterations of Sinkhorn--Knopp algorithm when solving $P^{\Omega}$, and $T^{\Gamma}$ is the number of Sinkhorn--Knopp iterations for solving $P^{\Gamma}$. We set $|\Omega|$ to be multiples of the number of classes in a dataset. 
\begin{table}[htbp]
\centering
\caption{Hyper-parameter search space of \model~ for datasets in Tab.~\ref{tab:combined_results}.}
\label{app:tab:hyper1}
\resizebox{\textwidth}{!}{
\begin{tabular}{lccccccccc}
\toprule
               & \textbf{Penn94} & \textbf{Cornell5} & \textbf{Amherst41} & \textbf{Genius} & \textbf{US-election} & \textbf{Wisconsin} & \textbf{Cora} & \textbf{Citeseer} & \textbf{Pubmed} \\ 
\midrule
lr      & 0.005           & 0.005              & 0.005         & 0.005        & 0.005     & 0.01  & 0.005 & 0.001, 0.002 &0.005\\ 
$\lambda$ & 2 & 2, 5 & 2 &  2 & 2,5 & 2 & 2 & 2 & 2 \\ 
$T^{\Omega}$ &10,5,3 & 10,5,3 & 10,5,3 & 10,5,3 & 10,5,3 &10,5,3 & 10,5,3 & 10,5,3& 10,5,3\\ 
$T^{\Gamma}$ &5, 3, 1 &5, 3, 1&5, 3, 1 & 5, 3, 1 & 5, 3, 1 & 5, 3, 1 & 5, 3, 1 & 5, 3, 1 & 5, 3, 1\\ 
$|\Gamma_i|$ &2 & 2 & 2 & 2 & 2  & 2 & 2 & 2 & 2\\ 
$|\Omega|$ &2,4, 8, 16, 30  & 2,4, 8, 16, 30& 2,4, 8, 16, 30 & 2,4,8 & 2,4, 8, 16, 30 & 5, 10, 20 & 6,12,24,48 & 7, 14, 28 & 6, 12\\ 
$\alpha$ & {0. 0.2, 0.5, 0.8, 1} &{0. 0.2, 0.5, 0.8, 1} & {0. 0.2, 0.5, 0.8, 1}  & {0. 0.2, 0.5, 0.8, 1}  & {0.2, 0.5, 0.8} & {0.2, 0.5, 0.8} & {0.2, 0.5, 0.8} & {0.2, 0.5, 0.8} & {0.2, 0.5, 0.8} \\ 
$\beta$ & {0, 0.2, 0.5, 0.8} & 0, 0.2, 0.5, 0.8  & {0, 0.2, 0.5, 0.8}  & {0, 0.2, 0.5, 0.8}  & {0.2, 0.5, 0.8}  & {0.2, 0.5, 0.8} & {0.2, 0.5, 0.8} & {0.2, 0.5, 0.8}  & {0.2, 0.5, 0.8} \\ 
\# layers in MLP & {1,2} & {1,2}  & {1,2}  & {1,2}  & {1,2}  & {1,2}  & {1,2}  & {1,2}  & {1,2}  \\ 
$L: $~\# layers  & {2, 5} & {2, 5} & {2, 5} & {2, 4, 8,16} & {2, 5} & {2, 5} &2,4,8,10 & 2,4,8 & 2,4,8\\ 
$\omega_1$ & {0.001, 0.01} & {0.001, 0.01}  & {0.001, 0.01}  & 0.001,0.01 & {0.001, 0.01} & {0.001, 0.01} &{0, 0.001, 0.01, 0.05}  & {0.001, 0.01}  & {0.001, 0.01}\\ 
$\omega_2$ & {0.005, 0.05}  & {0.005, 0.05}  & {0.005, 0.05} & 0,0.005,0.05 &   {0.005, 0.05}  &   {0.005, 0.05} & {0.005, 0.05, 0.08, 0.1}  & {0.005, 0.05}  &   {0.005, 0.05}\\ 
epochs & 30 & 30 &30 & 3000 & 500 & 200 & 200 & 50 &500 \\ 
weight\_decay & 5e-4 & 5e-4 & 5e-4 & 5e-4 & 5e-4 & 1e-3 & 5e-4 & 5e-4 & 5e-4\\ 
aggregation & mean, sum  & mean, sum  & mean, sum  & mean, sum  & mean, sum & mean, sum & mean, sum  & mean, sum & mean, sum\\ 
dropout & 0.5 & 0.5 &0.5 & 0.5 & 0.5 & 0.5 & 0.5 & 0.5 & 0.5\\ 
normalization & None & None & None & LN & None & None & None & None & None\\ 
hidden\_channels & {16, 32, 64, 128} & {16, 32, 64, 128} &  {16, 32, 64, 128} & {16, 32} &  {16, 32, 64, 128} & {16, 32, 64, 128} & {16, 32, 64, 128} & {16, 32, 64, 128} & {16, 32, 64, 128}\\ 
\bottomrule
\end{tabular}
}
\end{table}

\begin{table}[htbp]
\centering
\caption{Hyper-parameter search space of \model~ for datasets in Tab.~\ref{tab:results}.}
\label{app:tab:hyper2}
\resizebox{0.65\textwidth}{!}{
\begin{tabular}{lccccc}
\toprule
                & \textbf{Roman-Empire} & \textbf{Amazon-Ratings} & \textbf{Minesweeper} & \textbf{Tolokers} & \textbf{Questions} \\ \midrule

lr      & 0.005           & 0.005              & 0.005         & 0.005        & 0.001       \\ 
$\lambda$ & 2 & 2  & 2  & 2  & 2 \\ 
$T^{\Omega}$ &5 &5&5 & 5 & 5\\ 
$T^{\Gamma}$ &3 &3&3 & 3 & 3\\ 
$|\Gamma_i|$ &2 & 2 & 2 & 2 & 2 \\ 
$|\Omega|$ &18 & 5, 10 & 2,4,8 & 8, 16 & 2,4,8\\ 
$\alpha$ & {0.2, 0.5, 0.8} & 0.2, 0.5, 0.8  & {0.2, 0.5, 0.8}  & {0.2, 0.5, 0.8}  & {0.2, 0.5, 0.8}\\ 
$\beta$ & {0.2, 0.5, 0.8} & 0.2, 0.5, 0.8  & {0.2, 0.5, 0.8}  & {0.2, 0.5, 0.8}  & {0.2, 0.5, 0.8}\\ 
\# layers in MLP & {1,2} & {1,2}  & {1,2}  & {1,2}  & {1,2}\\ 
$L: $~\# layers  & {2, 4, 8,16,20} & {2, 4, 8} & {2, 4, 8} & {2, 4, 8} & {2, 4, 8, 16}\\ 
$\omega_1$ & {0.001, 0.01} & {0.001, 0.01}  & {0.001, 0.01}  & 0.02 & {0.001, 0.01}\\ 
$\omega_2$ & {0.005, 0.05}  & {0.005, 0.05}  & {0.005, 0.05} & 0.001 & 0.001 \\ 
epochs & 3000 & 3000 &3000 & 3000 & 3000\\ 
weight\_decay & 5e-4 & 5e-4 & 5e-4 & 5e-4 & 5e-4\\ 
aggregation & mean, sum  & mean, sum  & mean, sum  & mean, sum  & mean, sum\\ 
dropout & 0.2 & 0.2 &0.2 & 0.2 & 0.2\\ 
normalization & None, LN & None, LN & None, LN & None, LN & None, LN\\ 
hidden\_channels & {16, 32, 64} & {16, 32, 64} &  {16, 32, 64} & {16, 32, 64} & {16, 32, 64}\\ \bottomrule

\end{tabular}
}
\end{table}

\paragraph{Hyperparameters for baselines}
We used the code provided by GloGNN~\citep{li2022finding} to reproduce the baseline results for dataset Cornell5, Amherst41, and US-election. The grid search space for hyper-parameters are listed below. Note that some hyper-parameters only apply to a subset of baselines. All other baselines results are obtained from~\citet{li2022finding} and~\citet{platonov2023critical}.


\begin{itemize}
    \item MLP: hidden dimension $\in \{16, 32, 64\}$, number of layers $\in \{2,3\}$. Activation function is ReLU.
    \item GCN: lr $\in \{.01, .001\}$, hidden dimension $\in \{4, 8, 16, 32, 64\}$. Activation function is ReLU.
    \item GAT: lr $\in \{.01, .001\}$. hidden channels $\in \{4, 8, 12, 32\}$ and gat heads $\in \{2, 4, 8\}$. number of layers $\in \{2\}$. We use the ELU as activation.
    \item MixHop: hidden dimension $\in \{8, 16, 32\}$, number of layers $\in \{2\}$.
    \item GCNII: number of layers $\in \{2,8,16,32,64\}$, strength of initial residual connection $\alpha \in \{0.1,0.2,0.5\}$, hyperparameter for strength of the identity mapping $\theta \in \{0.5,1.0,1.5\}$.
    \item H\textsubscript{2}GCN: hidden dimension $\in \{ 16, 32\}$,  dropout $\in \{0, .5\}$, number of layers $\in \{1,2\}$. Model architecture follows Section 3.2 of \citet{zhu2020beyond}.
    \item WRGAT: lr $\in \{.01\}$, hidden dimension $\in \{32\}$.
    \item GPR-GNN: lr $\in \{.01, .05, .002\}$, hidden dimension $\in \{16, 32\}$.
    \item GGCN: lr $\in \{.01\}$, hidden channels $\in \{16, 32, 64\}$, number of layers $\in \{1, 2, 3\}$, weight decay $\in \{1e^{-7}, 1e^{-2}\}$, decay rate $\in \{0, 1.5\}$, dropout rate $\in \{0, .7\}$,
    \item ACM-GCN: lr $\in \{.01\}$, weight decay $\in \{5e^{-5}, 5e^{-4}, 5e^{-3}\}$, dropout $\in \{0.1, 0.3, 0.5, 0.7, 0.9\}$, hidden channels $\in \{64\}$, number of layers $\in \{2\}$, display step $\in \{1\}$.
    \item LINKX: hidden dimension $ \in \{16, 32, 64\}$, number of layers $\in \{1, 2\}$. Rest of the hyper-parameter settings follow \citet{lim2021large}.
    \item GloGNN++: lr $\in \{.001, .005, .01\}$ , weight decay $\in \{0, .01, .1\}$, dropout $\in \{0, .5, .8\}$, hidden channels $\in \{128, 256\}$, number of layers $\in \{1, 2\}$, 
    $\alpha \in \{0, 1\}$,
    $\beta \in \{0.1, 1\}$,
    $\gamma \in \{0.2, 0.5, 0.9\}$,
    $\delta \in \{0.2,0.5\}$,
    number of normalization layers $\in \{1, 2\}$,
    orders $\in \{1, 2, 3\}$.

\end{itemize}

\end{document}